\newcommand{\W}{\mathcal{W}}
\newtheorem{theorem}{Theorem}[section]
\newtheorem{proposition}[theorem]{Proposition}
\newtheorem{lemma}[theorem]{Lemma}
\newtheorem{definition}[theorem]{Definition}
\newtheorem{problem}[theorem]{Problem}
\newtheorem{example}[theorem]{Example}
\newtheorem{remark}[theorem]{Remark}
\newtheorem{assumption}[theorem]{Assumption}
\newcommand{\Ropt}{R_{\texttt{OPT}}}
\DeclareMathOperator*{\argmin}{arg\,min}
\title{\Large \textbf{Multi-Robot Persistent Monitoring:  Minimizing Latency \\ and Number of Robots with Recharging Constraints}}
\author{Ahmad Bilal Asghar, Shreyas Sundaram, and Stephen L. Smith
\thanks{This work was supported in part by the Natural Sciences and Engineering Research Council of Canada (NSERC)}
\thanks{A.\ B.\ Asghar is with the Maryland Robotics Center, University of Maryland, College Park MD (\texttt{abasghar@umd.edu}).  S.\ Sundaram is with the School of Electrical and Computer Engineering, Purdue University, West Lafayette, IN (\texttt{sundara2@purdue.edu}).  S.\ L.\  Smith is with the Department of Electrical and Computer Engineering, University of Waterloo, Waterloo, ON Canada (\texttt{stephen.smith@uwaterloo.ca}).}
}
\begin{document}
\maketitle
\begin{abstract}
In this paper we study multi-robot path planning for persistent monitoring tasks. We consider the case where robots have a limited battery capacity with a discharge time $D$. We represent the areas to be monitored as the vertices of a weighted
graph. For each vertex, there is a constraint on the maximum allowable
time between robot visits, called
the latency. The objective is to find the minimum number of
robots that can satisfy these latency constraints while also ensuring that the robots periodically charge at a recharging depot. The decision
version of this problem is known to be PSPACE-complete. We
present a $O(\frac{\log D}{\log \log D}\log \rho)$ approximation algorithm for the problem
where $\rho$ is the ratio of the maximum and the minimum latency
constraints. We also present an orienteering based heuristic
to solve the problem and show empirically that it typically provides higher quality solutions than the approximation algorithm. We extend our results to  provide an algorithm for the problem of minimizing
the maximum weighted latency given a fixed number of robots.
We evaluate our algorithms on large problem instances in a patrolling scenario and in a wildfire monitoring application. We also compare the algorithms with an existing solver on benchmark instances. 
\end{abstract}



\section{Introduction}
Due to rapid developments in mobile robotics, teams of robots can now perform long term monitoring tasks. Examples include infrastructure inspection~\cite{cabrita2010infrastructure} to detect anomalies or failures, patrolling for  surveillance~\cite{basilico2012patrolling,asghar2016stochastic}, 3D reconstruction of scenes~\cite{roberts2017submodular,bircher2015structural} in changing environments, informative path planning~\cite{cao2013multi} for observing dynamic properties of an area, and forest fire monitoring~\cite{merino2012unmanned}. The goal of these monitoring tasks is to deploy a team of cooperating mobile agents or robots in the dynamically changing environment to continually observe locations of interest. With limited resources, stationary agents cannot persistently monitor all regions of interest in a large environment, and therefore a team of mobile robots must patrol the environment to gather the information. If the environment evolves over time, as in persistent monitoring scenarios, then the locations in the environment need to be visited repeatedly by the team of robots.

In such tasks, locations of more importance should be visited more often as compared to locations with relatively low importance. This requirement can be modeled using a latency constraint for each location which specifies the maximum time the robots are allowed to stay away from a location. The locations that are at higher risk in a surveillance application will have a lower latency constraint and hence will be visited more often by the robots to satisfy that latency constraint. In this paper we consider the problem of finding the patrolling paths for the robots to satisfy the latency constraints of the locations in the environment. In applications, where the number of robots is constrained, finding a feasible solution to the problem may not be possible. In such cases, we consider the problem of minimizing the maximum weighted latency of the locations in the environment.

\begin{figure}[t]
    \centering
    \includegraphics[width=\linewidth]{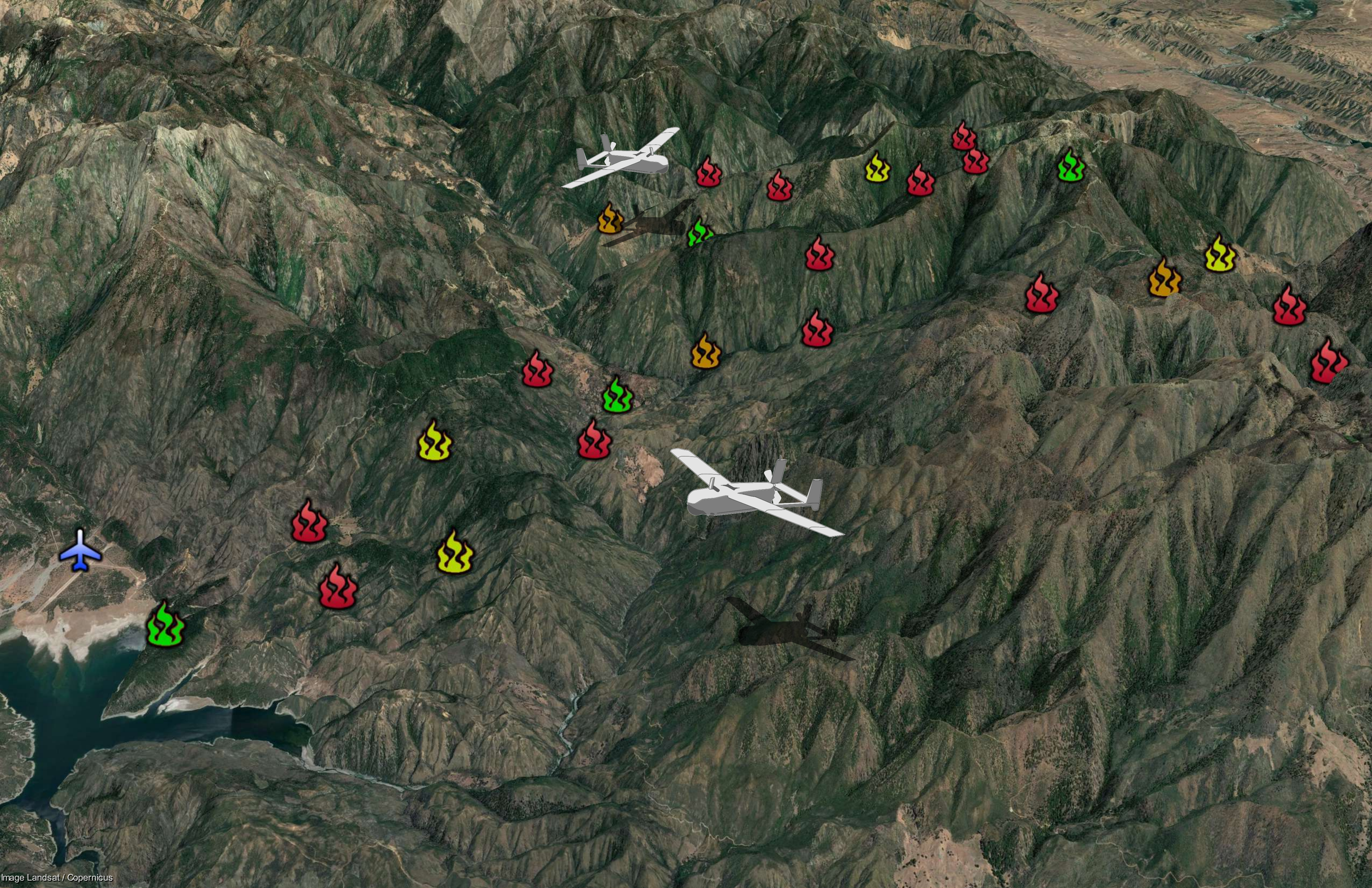}
    \caption{Two unmanned aerial vehicles (UAVs) surveying a wildfire-prone area for early detection and monitoring. The color-coded fire hotspot icons indicate the importance of a location based on the rate of spread and intensity of the fire estimated from satellite data. The UAVs must return to the nearby airstrip for refueling.}
    \label{fig:main}
\end{figure}

\subsection{Related Work}
\emph{Single robot patrolling paths:} Persistent monitoring problems have been extensively studied in the literature and there is a substantial amount of work dedicated to finding patrolling paths for the monitoring of environments~\cite{hokayem2007persistent,smith2011persistentocean,nigam2012control,elmaliach2009multi}. For a single robot, cyclic patrolling paths to detect randomly arriving events are studied in~\cite{yu2015persistent}. Different ways of randomizing a Travelling Salesman Path are considered in~\cite{Sak2008patrolling} to empirically show that randomizing the paths can help in detecting intrusions. In~\cite{Alamdari2014}, the problem of minimizing the maximum revisit time is considered and an approximation algorithm is proposed for the problem. Distributed algorithms to minimize refresh times are given in~\cite{pasqualetti2012cooperative}. Two heuristic approaches for minimizing the maximum staleness of information are compared in~\cite{nigam2008persistent}. In contrast to these works, we consider the multi-robot problem for minimizing the maximum revisit time and provide an algorithm with an upper bound on the cost of the solution.

\emph{Persistent coverage in continuous environments:} In~\cite{palacios2016multi}, persistent coverage using multiple robots in a continuous environment is considered. The coverage levels of locations decay with time and robots must keep visiting the locations to maintain a desired coverage level. This scenario of coverage quality deteriorating with time is also considered in~\cite{palacios2019equitable} where equitable partitions of a continuous non-convex environment are found, and then paths covering the partition for each robot are computed. In~\cite{palacios2016multi2}, the coverage of a location increases depending on the time a robot stays at that location, and the task is to find optimal staying times for robots.  
In this paper we consider a discrete environment represented as a graph, and instead of considering decaying coverage levels at the locations of the environment, the problem studied in this paper seeks to satisfy a given latency constraint for each location, which is a constraint on the maximum time a vertex can be left unvisited.

\emph{Monitoring with latency constraints:} The problem of finding paths to satisfy latency constraints has been studied in~\cite{las2013persistent,ho2015cyclic,drucker2016cyclic}. The authors in~\cite{las2013persistent} use incomplete greedy heuristics to determine if a single robot can satisfy the constraints on all vertices of a graph. They show that if a solution exists, then a periodic solution also exists. In this paper, we consider the multi-robot problem and our objective is to minimize the number of robots that can satisfy the latency constraints on the given graph. The multi-robot version of the problem has been considered in~\cite{drucker2016cyclic,ho2015cyclic}, where it is called \emph{Cyclic Routing of Unmanned Aerial Vehicles}. The decision version of the problem for a single robot is shown to be PSPACE-complete in~\cite{ho2015cyclic}. The authors also show that the length of even one period of a feasible walk  can be exponential in the size of the problem instance. In~\cite{drucker2016cyclic}, the authors propose a solver based on  Satisfiability Modulo
Theories (SMT). To apply an SMT solver, they impose an upper bound on the length of the period of the solution.  Since an upper bound is not known \emph{a priori}, the solver will not return the optimal solution if the true optimal period exceeds the bound.  The authors generate a library of test instances, but since their algorithm scales exponentially with the problem size, they solve instances up to only 7 vertices. We compare our algorithms with their solver and show that our algorithms run over $500$ times faster on average and return solutions with the same number of robots on $98\%$ of the benchmark instances provided by~\cite{drucker2016cyclic}.

\emph{Minimizing maximum weighted latency:} A closely related problem is where instead of latency constraints, each vertex has a weight associated with it and the objective is to minimize the maximum weighted latency (time between consecutive visits) for an infinite walk. We also propose an algorithm for the multi-robot version of this problem, along with recharging constraints. This problem of minimizing maximum weighted latency without recharging constraints has been studied in the literature. An approximation algorithm for the single robot version of the problem is provided in~\cite{Alamdari2014}. The authors in~\cite{afshani2022cyclic} consider the multi-robot version of this problem and present properties of cyclic solutions. They provide an approximation algorithm for the multi-robot problem in~\cite{afshani2021approximation}. 

\emph{Planning for energy constrained robots:} The robots performing persistent monitoring tasks in an environment stay active for very long times and therefore, they need to be refueled. The authors in~\cite{mathew2015multirobot} find routes for dedicated charging robots that rendezvous with the patrolling UAVs in order to replenish their batteries. This rendezvous problem with stochastic energy consumption is considered in~\cite{shi2022risk}. In~\cite{maini2018persistent} the problem of monitoring a terrain using heterogeneous robots with charging constraints is considered. The problem of scheduling spare drones to take place of the drones that need recharging is considered in~\cite{hartuv2018scheduling}. A persistent monitoring problem with a single UAV and single recharging depot is considered in~\cite{hari2022bounds} and an estimate on the number of locations visited within one charge is used to find a solution.

\emph{Timed-automaton based solutions:} Timed automata have been used to model general multi-robot path planning problems~\cite{quottrup2004multi} as the clock states can capture the concurrent time dependent motion. In~\cite{ulusoy2013optimality}, temporal logic constraints are used to specify high-level mission objectives to be achieved by a set of robots. The routing problem with latency constraints can also be modeled as a timed-automaton since multiple robots may require synchronization to satisfy the latency constraints. A timed automaton based solution to the problem is presented in~\cite{Drucker2014thesis}, however it is shown to perform more poorly than the SMT-based approach in~\cite{drucker2016cyclic}, which we use as a comparison for our proposed algorithms.

\subsection{Contributions}
The contributions of this paper are as follows. We introduce the problem of minimizing the number of robots to satisfy the latency and recharging constraints in Section~\ref{sec:latency-problem}. In Section~\ref{sec:latency-approx} we present an $O(\frac{\log D}{\log \log D}\log \rho)$ approximation algorithm for the problem where $D$ is the discharge time of the robots and $\rho$ is the ratio of the maximum and the minimum latency constraints. We provide several heuristic algorithms to solve the problem in Section~\ref{sec:latency-greedy} and show through simulations that an orienteering-based heuristic algorithm produces high-quality solutions. In Section~\ref{sec:latency-minmax} we study the problem of minimizing the maximum weighted latency using multiple robots and provide an algorithm for the problem by establishing a relationship to the problem of satisfying latency constraints. Finally, in Section~\ref{sec:latency-sim} we evaluate the performance of the algorithms on large problem instances in a patrolling scenario and in a wildfire monitoring application. We also show promising performance when comparing our algorithms against a state-of-the-art solver on benchmark instances.

This work builds on our preliminary conference paper~\cite{asghar2019multi}, which considered the monitoring problem with latency constraints, but without recharging.  In comparison to that earlier work we now provide complete proofs to establish approximation guarantees of our algorithms, we consider recharging constraints which substantially change the algorithms, guarantees and analysis, we provide an extension to min-max weighted latency, and we perform more extensive simulations.  

\section{Background}
In this section we present several key definitions that will be used throughout the paper.

A \emph{walk} in graph $G=(V,E)$ is defined as a sequence of vertices $(v_1,v_2,\ldots,v_k)$ such that $(v_i,v_{i+1}) \in E$ for each $1\leq i< k$.
A \emph{cycle} is a walk that starts and ends at the same vertex with no other vertex appearing more than once.
A \emph{tour} is a walk with no repeating edges.
An \emph{infinite walk} is an infinite sequence of vertices $(v_1,v_2,\ldots)$ such that $(v_i,v_{i+1})\in E$ for each $i\in \mathbb{N}$.

Given walks $W_1$ and $W_2$, $[W_1 ,W_2]$ represents the concatenation of the walks given that there is an edge between the last vertex of $W_1$ and the first vertex of $W_2$. Given a finite walk $W$, an infinite periodic walk is constructed by concatenating infinite copies of $W$ together, and is denoted by $\Delta (W)$.

In general, a walk can stay for some time at a vertex before traversing the edge towards the next vertex. Therefore we define a
\emph{timed walk} $W$ in graph $G$ as a sequence $(o_1,o_2,\ldots,o_k)$, where $o_i=(v_i,t_i)$ is an ordered pair that represents the holding time $t_i$ that the walk $W$ spends at vertex $v_i$, such that $(v_i,v_{i+1})\in E$ for each $1\leq i < k$. 

The definitions of infinite walk and periodic walk can be extended to infinite timed walk and periodic timed walk. A timed walk with ordered pairs of the form $(v_i,0)$ becomes a walk. The vertices traversed by walk $W$ are denoted by $V(W)$ and the length of walk $W=((v_1,t_1),(v_2,t_2),\ldots,(v_k,t_k))$ is denoted by $\ell(W) = \sum_{i=1}^{k-1}{\ell(v_i,v_{i+1})} + \sum_{i=1}^{k}{t_k}$.

Since we are considering multi-robot problems, synchronization between the walks is important. Given a set of walks $\W = \{W_1,W_2,\ldots, W_k\}$ on graph $G$, we assume that at time $0$, each robot $i$ is at the first vertex $v^i_1$ of its walk $W_i$, and will spend the holding time $t^i_1$ at that vertex before moving to $v^i_2$.\\

The \emph{Orienteering Problem}~\cite{golden1987orienteering} is a variant of the {\sc traveling salesman problem}. The input to the problem is a weighted directed or undirected graph $G=(V,E)$ with edge weights $\ell(e)$ for $e\in E$, two vertices $s,t \in V$, and a time limit $T_{max}$. Each vertex $i \in V$ has a score $\psi_i$ associated with it. The problem is to find an $s-t$ walk (a walk originating from $s$ and arriving eventually at $t$) of length not exceeding $T_{max}$ which maximizes the total score obtained by visiting the vertices. If a vertex is visited more than once in the walk, its score is counted only once.

A straightforward reduction from TSP renders the orienteering problem NP-hard. Chekuri \emph{et~al.}~\cite{chekuri2012improved} give a $(2+\epsilon)$ approximation algorithm for the problem. The existing exact and approximation algorithms are discussed and compared in~\cite{gunawan2016orienteering}.

\emph{Minimum Cycle Cover Problem:} 
Given a graph $G$ and length $\lambda$, the Minimum Cycle Cover Problem (MCCP) is to find minimum number of cycles that cover the whole graph such that the length of the longest cycle is at most $\lambda$. This problem is NP-hard and a $14/3$-approximation algorithm for MCCP is given in~\cite{yu2016improved}.

The following problem is a rooted version of MCCP.

\emph{Rooted Minimum Cycle Cover Problem:}
Given a graph $G=(\{V\cup \mu\},E)$ with $n$ vertices and a distance constraint $D$, the Rooted Minimum Cycle Cover Problem (RMCCP) is to find the minimum number of tours, each containing the depot $\mu$, such that the tours cover all the vertices in $G$ and the length of the longest tour is at most $\lambda$. This problem is also known as Distance Constrained Vehicle Routing Problem and a $O(\frac{\log D}{\log \log D})$ approximation algorithm for this problem with paths instead of tours is given in~\cite{friggstad2014approximation}. It is shown in~\cite{nagarajan2012approximation} that the approximation ratios of path and tour versions of Distance Constrained Vehicle Routing Problem are within a factor of $2$, resulting in a $O(\frac{\log D}{\log \log D})$ approximation algorithm for RMCCP. Using the Orienteering problem as a greedy subroutine results in a $O(\log n)$ approximation for RMCCP~\cite{nagarajan2012approximation}.

\section{Problem Statement}
\label{sec:latency-problem}
Consider an undirected weighted graph $G=(\{V\cup \mu\},E)$ representing the environment to be monitored by the robots. The set of vertices, $V$, represents the locations that need to be monitored and the vertex $\mu$ represents the depot or recharging location for the robots. The edge lengths are given by $\ell(e)$ for each edge $e\in E$. These edge lengths are metric and represent the time taken by the robots to travel between the vertices. Each vertex $v\in V$ has a latency constraint, denoted by $r(v)$, which represents the maximum time allowed between consecutive visits to that vertex. The robots have a discharging time of $D$, indicating that no robot can stay away from the recharging vertex $\mu$ for more than $D$ amount of time. 

In practice, robots may need to spend some time at a vertex to inspect it or gather information. This time can be added to the length of the edges leading to that vertex, resulting in an equivalent metric graph with zero stay times and modified latency constraints~\cite{drucker2016cyclic}. Hence, we assume without loss of generality, that the robots do not need to spend any time inspecting the vertices. Similarly, the recharging time can also be assumed to be zero.

Note that setting the latency constraint of the recharging vertex to $D$ does not capture the recharging constraints, as multiple robots can work together to satisfy the latency constraint of a vertex $v$ when $L(\W,v)\leq r(v)$, whereas the recharging constraint has to be satisfied for each robot, i.e., $L(W_k,\mu)\leq D$ for each robot $k$. 

We also make the following two assumptions about the problem instance.
\begin{assumption}
\label{assumptions}
We assume that
\begin{enumerate}
    \item $D\geq \max_v 2\ell(\{\mu,v\})$, and
    \item $r(v)\geq 2\ell(\{\mu,v\})$, $\forall v\in V$ \label{assumption2}
\end{enumerate}
\end{assumption}

The first assumption is necessary for the existence of a feasible solution while the second assumption is necessary for the feasibility of solutions that involve robots working independently so that a robot can recharge and visit a vertex while satisfying the latency constraint of that vertex by itself. We formally define the problem statement after the following definition of latency.

\begin{definition}[Latency]
Given a set of infinite walks $\W = \{W_1,W_2,\ldots, W_k\}$ on a graph $G$, let $a^v_i$ represent the $i^{th}$ arrival time for the walks to vertex $v$. Similarly, let $d^v_i$ represent the $i^{th}$ departure time from vertex $v$. Then the latency $L(\W,v)$ of vertex $v$ on walks $\W$ is defined as the maximum time spent away from vertex $v$ by the walks, i.e., $L(\W,v) = \sup_{i}{(a^v_{i+1}-d^v_i)}$.
\end{definition}

\begin{problem}[Minimizing Robots with Latency and Recharging Constraints]
\label{pbm:min_robs} Given an undirected graph $G=(\{V\cup \mu\})$, discharging time $D$, and latency constraints $r:V\to \mathbb{R}\cup\{\infty\}$, find a set of walks $\W$ with minimum cardinality such that:
\begin{enumerate}
    \item the latency constraints of all vertices are satsified, i.e., $L(\W,v) \leq r(v), \forall v \in V$, and
    \item no robot runs out of charge (spends more than $D$ time away from $\mu$).
\end{enumerate}

\end{problem}
The decision version of the problem is to determine whether there exists a set of $R$ walks $\W=\{W_1,W_2,\ldots,W_R\}$ such that $L(\W,v) \leq r(v)$ for all $v \in V$ and $L(W_k,\mu)\leq D$ for all $k\in\{1,\ldots,R\}$. Note that although we have assumed that the robots do not need to stay at a vertex to inspect that vertex, in a general solution to the problem the robots might still need to stay at vertices in order to coordinate among themselves to satisfy the latency constraints. Therefore a general solution to Problem~\ref{pbm:min_robs} will be a set of timed walks with possibly non-zero holding times.

This problem is defined on a graph, where the edges and their lengths represent the movement of robots within the environment. The graph can be generated using a method such as probabilistic roadmap (PRM) or any other environment decomposition method ~\cite{kavraki1996probabilistic}. The latency constraints specify the maximum amount of time that can pass between visits to a vertex. For instance, in dynamic scene reconstruction, each vertex corresponds to a viewpoint~\cite{roberts2017submodular}. The latency constraints may indicate the maximum staleness of information that can be tolerated for the voxels captured from that viewpoint.

\subsection{Multiple Robots on the Same Walk}\label{sec:factor_R}
In multi-robot problems that involve finding cycles or tours in a graph, the cost of the tour can be reduced by a factor of $n$ by placing $n$ robots on the tour such that each robot follows the one ahead of it at a distance of $1/n$ times the length of the tour~\cite{chevaleyre2004theoretical}. We will refer to this placement of multiple robots on a tour as equally spacing robots on a tour. Equally spacing multiple robots on the solution for a single robot does not work in a similar manner for Problem~\ref{pbm:min_robs}: if a periodic walk $W$ gives latency $L(W,v)$ on vertex $v$, equally spacing more robots on one period of that walk does not necessarily reduce the latency for that vertex. Figure~\ref{fig:example1} gives an example of such a walk. The latency of vertices $a,b$ and $c$ on the walk $(a,b,a,c,a)$ are $2,4$ and $4$ respectively. The length of one period of the walk is $4$. If we place another robot following the first robot with a lag of $2$ units, the latency of vertex $a$ remains the same. If we place the second robot at a lag of $1$ unit, the latency will reduce to $1$ for vertex $a$ and $3$ for vertices $b$ and $c$.
Hence, cycles are an exception, for general walks we need more sophisticated algorithms than finding a walk for a single robot and adding more robots on that walk until the constraints are satisfied.
\begin{figure}[t]
\centering
\includegraphics[width=.4\linewidth]{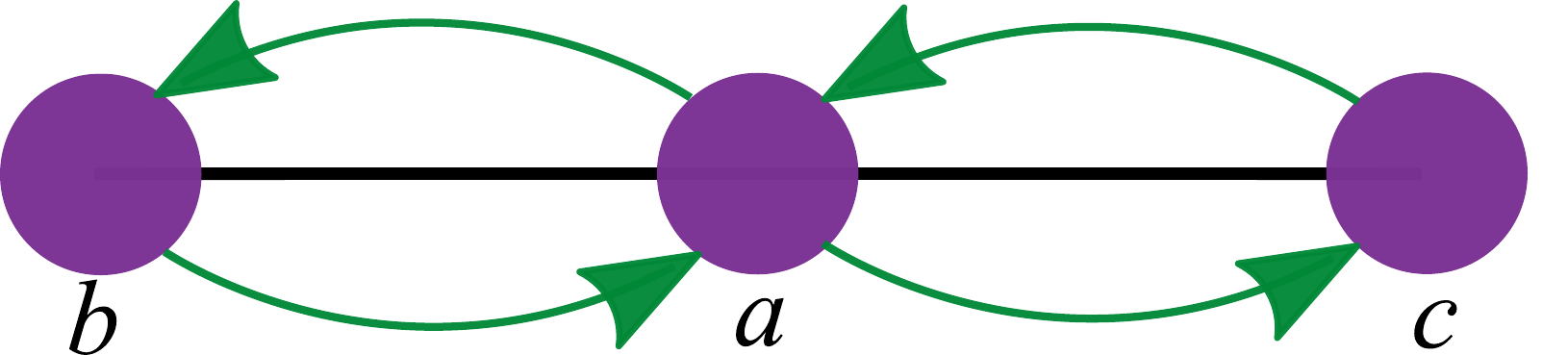}
\caption[Equally spacing robots on a walk]{A graph with three vertices and the walk $(a,b,a,c,a)$. The length of shown edges is one. Equally spacing two robots on this walk does not halve the latencies.}
\label{fig:example1}
\end{figure}

\section{Approximation Algorithm}
\label{sec:latency-approx}

The decision version of Problem~\ref{pbm:min_robs} with a single robot and without recharging constraints has been shown to be PSPACE-complete in~\cite{ho2015cyclic}. As a result, approximate and heuristic solutions must be employed to solve the problem. In this section, we present an approximation algorithm for the problem.

We will begin by discussing a simple approach to the problem and then gradually improve upon it to derive the approximation algorithm. One naive solution to the problem without recharging constraint is to find a TSP tour of the graph and equally space robots on that tour to satisfy all the latency constraints. However, a single vertex with a very small $r(v)$ can result in a solution with the number of robots proportional to $1/r(v)$. To solve this issue, we can partition the vertices of the graph such that the latencies in one partition are close to each other, and then place robots on the TSP tour of each partition. If more than one robot is required for a partition $V'$, we can solve the MCCP for that partition. The benefit of using the MCCP over placing multiple robots on a TSP is that if all the vertices in $V'$ had the same latency requirement, then we have a guarantee on the number of cycles required for that partition. To handle the recharging constraint, we can use RMCCP instead of MCCP on a partition to get minimum number of cycles rooted at $\mu$. 

However, a general solution to the problem might not consist of simple cycles. Lemma~\ref{lem:yann} establishes a connection between solutions made up of cycles and general solutions, and demonstrates that a solution composed of cycles will have latencies that are no more than twice as long as any general solution with the same number of robots. Therefore, by solving the RMCCP on a partition with its latency constraints multiplied by two, we can upper bound the number of cycles required. We can then use the idea from TSP based solutions and assign multiple robots to each cycle to meet the latency constraints.

The approximation algorithm is given in Algorithm~\ref{alg:approx_latency}. The first five lines of the algorithm partition the vertices according to their latency constraints. For a partition $V_i$, the function RMCCP$(V_i, \mu, D)$ called in Line~\ref{algln:MCCP} uses an approximation algorithm for the Rooted Minimum Cycle Cover Problem to find cycles rooted at $\mu$ such that each cycle has length at most $D$. Then, those cycles are assigned to different robots to satisfy the latency constraints. We use the following definition to establish the approximation ratio of Algorithm~\ref{alg:approx_latency}. A similar relaxation technique was also used in~\cite{Alamdari2014}.

\begin{algorithm}

\DontPrintSemicolon
\caption{\textsc{ApproximationAlgorithm}}
\label{alg:approx_latency}

\KwIn{Graph $G=(\{V\cup\mu\},E)$, discharging time $D$, latency constraints $r(v), \forall v\in V$}
\KwOut{A set of walks $\W=\{W_1,W_2,\ldots,W_R\}$, such that $L(\W,v)\leq r(v)$, $\forall v\in V$ and $L(W_k,
    \mu)\leq D$, $\forall k\in \{1,\ldots,R\}$}
\vspace{0.2em}
\hrule
 $r_{\texttt{max}}\gets \max_v r(v)$\;
 $r_{\texttt{min}}\gets \min_v r(v)$\;
 \eIf{$r_{\texttt{max}}/ r_{\texttt{min}}$ is an exact power of $2$}{$\rho \gets \frac{r_{\texttt{max}}}{ r_{\texttt{min}}}+1$\;}
 {$\rho \gets \frac{r_{\texttt{max}}}{ r_{\texttt{min}}}$\;}
$\W = \{\}$\;
Let $V_{i}$ be the set of vertices $v$ such that  $ r_{\texttt{min}}2^{i-1}\leq r(v) < r_{\texttt{min}}2^{i} $  for $1\leq i \leq \lceil \log_2 \rho \rceil $\label{algln:levels}\;
\For {$i=1$ to $\lceil \log_2 \rho \rceil$}{
$\{C_1,\ldots,C_{|\mathcal{C}|}\} = $ RMCCP$(V_{i}, \mu, D)$
	\label{algln:MCCP}\;
 Let $b = \max\{1,\lfloor ( r_{\texttt{min}}2^{i+1}/D ) \rfloor\}$\;
 \For {$j = 0$ to $\lfloor\frac{|\mathcal{C}|}{b}\rfloor -1 $}{
 Concatenate $\{C_{jb+1},\ldots,C_{(j+1)b}\}$ cycles to get walk $W'$\;
 Equally space $\lceil \ell(W')/\min_{v\in V(C)}{r(v)} \rceil$ robots on walk $W'$ to get walks $\W''$\label{algln:final}\;
 $\W=\{\W,\W''\}$\;
 }
}
\end{algorithm}

\begin{definition}[Relaxed Latency Constraints]
Let $r_{\texttt{min}}=\min_v r(v)$. The latency constraints of the problem are said to be relaxed if for every vertex $v$, its latency constraint is updated from $r(v)$ to $\bar{r}(v) = r_{\texttt{min}}2^{x}$ such that $x$ is the smallest integer for which $r(v) < r_{\texttt{min}}2^{x}$.
\end{definition}
We will also need the following lemma that is an extension of Lemma 2 in~\cite{chevaleyre2004theoretical}.
\begin{lemma}
\label{lem:yann}
Consider an undirected metric graph $G=(\{V\cup\mu\},E)$ with latency constraint $r(v) = M$ for all $v\in V$, and recharging depot $\mu$ with discharging time $D$. For any set of walks $\W=\{W_1,\ldots,W_R\}$ that satisfies the recharging and latency constraints, there exists a set of walks $\W'$ satisfying the recharging constraints such that $|\W|=|\W'|$, and each walk $W_i\in \W'$ is of the form $[C_1,\ldots,C_a]$, $a\geq 1$, where $C_j$ is a cycle rooted at $\mu$. Moreover, $\max_{v\in V} L(\W',v)\leq 2M$.  
\end{lemma}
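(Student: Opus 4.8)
The plan is to lift the single-robot ``folding'' argument behind Lemma~2 of \cite{chevaleyre2004theoretical} to the multi-robot recharging setting, so that the factor of two appears exactly as the cost of closing an open sub-walk into a cycle. The first step is to restate the hypothesis combinatorially: since $L(\W,v)$ is the supremum of the inter-visit gaps at $v$, the assumption $L(\W,v)\le M$ is equivalent to saying that every half-open time interval of length $M$ contains at least one visit to $v$ by some robot. Fixing the window $[0,M]$, this yields the \emph{collective coverage} property that the sub-walks $W_1[0,M],\dots,W_R[0,M]$ together visit \emph{all} of $V$. This observation is the multi-robot replacement for the single-agent counting in \cite{chevaleyre2004theoretical}: it lets me reason about each robot in isolation while still certifying the latency of every vertex, because each vertex is guaranteed to be covered by at least one of the new per-robot cycles.

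Next I would construct $W_k'$ from robot $k$'s window sub-walk. Travelling at unit speed, $W_k[0,M]$ covers distance at most $M$ and runs from $x_k=W_k(0)$ to $y_k=W_k(M)$. I close it through the depot by prepending a shortest $\mu$-to-$x_k$ segment and appending a shortest $y_k$-to-$\mu$ segment; by Assumption~\ref{assumptions}, part~\ref{assumption2}, each connector has length at most $M/2$, so the resulting closed walk $Z_k$ is rooted at $\mu$ and has length at most $2M$. Applying the triangle inequality (the edge lengths are metric) shortcuts $Z_k$ to a \emph{simple} cycle rooted at $\mu$ without increasing its length or dropping a vertex. Taking $W_k'=\Delta(Z_k)$ and using collective coverage, every vertex lies on some $Z_k$, and a vertex on a cycle of length $\ell(Z_k)\le 2M$ has latency at most $\ell(Z_k)\le 2M$; hence $\max_{v}L(\W',v)\le 2M$ while $|\W'|=|\W|$ by construction.

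The delicate part, which I expect to be the crux, is to make this construction respect recharging, i.e.\ to write each $Z_k$ as a concatenation $[C_1,\dots,C_a]$ of $\mu$-rooted cycles \emph{each} of length at most $D$ without letting the \emph{total} length---and hence the period, and hence the latency---exceed $2M$. When $D\ge 2M$ this is immediate, as $Z_k$ already has length at most $D$ and may be taken as a single cycle. The hard regime is $D<2M$, where $Z_k$ must be cut at interior points and the arcs reattached to $\mu$; Assumption~\ref{assumptions} guarantees every vertex lies within $D/2$ of $\mu$, so every such reattachment detour is admissible and keeps each piece within $D$, but a naive cut can add an overhead of up to $D/2$ and thereby inflate the period. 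The mechanism I would use to avoid this is that $W_k[0,M]$ is itself a segment of a recharging-feasible walk, so it already returns to $\mu$ at least every $D$ time units; those interior depot visits supply most of the needed cut points \emph{for free}, each contributing a genuine excursion of length at most $D$ rather than an artificial detour. Bounding the few remaining artificial cuts---and choosing where to cut relative to the inherited depot visits---so that the accumulated overhead still leaves the period at $2M$, rather than a larger constant multiple of $M$, is where both standing assumptions must be used together, and is the step I would spend the most care on.
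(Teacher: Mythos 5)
There is a genuine gap, and it sits exactly where you predicted: the decomposition into depot-rooted cycles that each respect the discharge time $D$ without inflating the period beyond $2M$. You defer this step ("the step I would spend the most care on") rather than complete it, and the difficulty you face is one your own construction creates. By shortcutting the closed walk $Z_k$ down to a \emph{single simple} cycle rooted at $\mu$, you discard the interior visits to $\mu$ that the window sub-walk $W_k[0,M]$ already contains; you are then forced to re-introduce cut points afterwards, and to worry about "artificial" cuts whose connector detours could add up to $D/2$ each and push the period past $2M$. Nothing in your writeup shows that the artificial cuts can be avoided or their overhead absorbed, so the recharging-feasibility claim for $\W'$ in the regime $D<2M$ is unproven.

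The paper's proof avoids the problem by never discarding the depot visits in the first place. It takes the \emph{entire} ordered sequence of vertices $v_1^k,\ldots,v_{m_k}^k$ visited by $W_k$ during $[0,M]$ -- including every visit to $\mu$ -- appends one final $\mu$, and makes this one period of $W_k'$; shortcutting is applied only to repeated vertices \emph{between} two consecutive depot visits, which turns each inter-depot stretch into a cycle rooted at $\mu$ but never removes a depot visit. With this ordering preserved, every inter-depot segment of $W_k'$ is a metric shortcut of a corresponding inter-depot segment of the recharging-feasible walk $W_k$ (the two wrap-around segments at the period boundary shortcut, via the triangle inequality, the original walk's excursions to its last depot visit before the window and its first depot visit after it), so each segment has length at most $D$ with \emph{zero} added overhead and no case split on $D$ versus $2M$. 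The period bound is then exactly your computation: the in-window trajectory contributes at most $M$, and the two closing connectors contribute $\ell(\{v_{m_k}^k,\mu\})+\ell(\{\mu,v_1^k\})\leq M$ by Assumption~\ref{assumptions}-\ref{assumption2}, giving $\max_{v\in V}L(\W',v)\leq 2M$. In short: your collective-coverage step and your $2M$ length bound match the paper, but the correct order of operations is "keep depot visits, shortcut within segments," not "collapse to a simple cycle, then cut."
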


\begin{proof}
Let $v_1^k,\ldots,v_{m_k}^k$ be the ordering of the vertices visited by walk $W_k$ between time $t=0$ and time $t=M$. Since $\W$ satisfies the latency constraints, $\bigcup_{k=1}^R \{v_1^k,\ldots,v_{m_k}^k\} $ contains all the vertices in $V$. 

Let $W'_k$ be a periodic walk that visits vertices $v_1^k,\ldots,v_{m_k}^k,\mu$ in one period. Since $W_k$ satisfies recharging constraints, $W'_k$ also satisfies the recharging constraints due to the metric property of the graph. Also, if any vertex is visited more than once between two consecutive visits to $\mu$ in $W'_k$, we can shortcut all the instances of that vertex apart from one in order to get cycles rooted at $\mu$. 

The length of the segment $v_1^k,\ldots,v_{m_k}^k$ of the walk $W'_k$  is at most $M$ and by Assumption~\ref{assumptions}-\ref{assumption2}, $\ell({v_{m_k}^k,\mu}) +\ell(\{\mu,v_1^k\})\leq M$. Hence, the latency of all the vertices in $V$ is at most $2M$.
\end{proof}

The following proposition gives the approximation factor of Algorithm~\ref{alg:approx_latency}.
\begin{proposition}
\label{prop:approx}
Given an undirected metric graph $G=(\{V\cup \mu\},E)$ with latency constraints $r(v)$ for $v\in V$, and discharging time $D$, Algorithm~\ref{alg:approx_latency} constructs $R$ walks $\W=\{W_1,W_2,\ldots, W_R\}$ such that
\begin{enumerate}
    \item $L(\W,v)\leq r(v)$ for all $v\in V$,
    \item $L(W_k,\mu)\leq D$, $\forall k\in \{1,\ldots,R\}$, and
    \item $R = O(\min\{\log n ,\frac{\log D}{\log \log D} \}\lceil\log(\rho)\rceil) \Ropt$,
\end{enumerate}
where $\Ropt$ is the minimum number of robots required to solve the Problem~\ref{pbm:min_robs}.
\end{proposition}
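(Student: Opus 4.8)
The plan is to prove the three claims in order, treating feasibility (items~1 and~2) as a direct consequence of the construction and reserving the real work for the approximation ratio (item~3). For feasibility I would first observe that every cycle returned by RMCCP$(V_i,\mu,D)$ has length at most $D$ and is rooted at $\mu$, so any concatenation $W'=[C_{jb+1},\dots,C_{(j+1)b}]$ is a \emph{closed} walk that returns to $\mu$ after each constituent cycle. Consequently each robot equally spaced on $W'$ traverses all of $W'$ with a phase shift, revisits $\mu$ at least once per cycle, and therefore never spends more than $\max_\ell \ell(C_\ell)\le D$ time away from the depot, giving item~2. For item~1, equally spacing $\lceil \ell(W')/\min_{v\in V(W')} r(v)\rceil$ robots on the closed walk $W'$ drives the latency of every vertex on $W'$ down to $\ell(W')$ divided by the number of robots, i.e.\ to at most $\min_{v\in V(W')} r(v)\le r(v)$. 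Since the sets $V_i$ partition $V$ and every vertex of $V_i$ lies on one of the groups processed in the inner loop, all latency constraints are met.

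For the ratio I would argue level by level and then sum. Write $M_i:=r_{\texttt{min}}2^{i}$, so every $v\in V_i$ has $M_i/2\le r(v)<M_i$; let $|\mathcal{C}_i|$ be the number of cycles RMCCP returns on $V_i$ and $N_i$ the optimal RMCCP value, so $|\mathcal{C}_i|\le \alpha N_i$ with $\alpha=O(\min\{\log n,\tfrac{\log D}{\log\log D}\})$ by the results quoted in Section~II. Let $R_i$ denote the robots the algorithm spends on level $i$. Bounding each group's robot count by $m_j=\lceil \ell(W'_j)/(M_i/2)\rceil\le 2\ell(W'_j)/M_i+1$ and summing, I obtain $R_i\le 2L_i/M_i+\lceil |\mathcal{C}_i|/b\rceil$, where $L_i$ is the total length of the cycles and $b=\max\{1,\lfloor 2M_i/D\rfloor\}$. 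The point of the choice of $b$ is that, in either discharge regime, the per-group robot count $O(bD/M_i)$ and the number of groups $|\mathcal{C}_i|/b$ multiply to a bound in which $b$ cancels; I would record the two contributions separately as a \emph{length term} $O(L_i/M_i)$ and a \emph{count term} $O(|\mathcal{C}_i|/b)$.

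The crux is the matching lower bound on $\Ropt$ per level, and here I would lean on Lemma~\ref{lem:yann}. Applying its construction to the sub-instance on $V_i$ with the uniform relaxed latency $M_i$ (valid because the optimum already satisfies $r(v)<M_i$ on $V_i$) converts the optimum into a cycle solution with the \emph{same} number $\Ropt$ of robots, in which each robot's period is a concatenation of cycles rooted at $\mu$ of total length at most $2M_i$, each cycle of length at most $D$, still covering $V_i$. Summing the periods shows these robots furnish a rooted cycle cover of $V_i$ with cycles of length at most $D$ and total length at most $2M_i\Ropt$; comparing with the minimum such length yields the length lower bound $\Ropt\ge \mathrm{OPT}_{\mathrm{len}}(V_i)/(2M_i)$, which I would pair with the length term. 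For the count term I would split on the regime: when $D\ge 2M_i$ each robot's period already fits inside a single charge, so it can be shortcut to one cycle of length at most $2M_i\le D$, giving $\Ropt$ cycles that cover $V_i$ and hence $\Ropt\ge N_i$; when $D<2M_i$ I would instead bound $N_i=O(\mathrm{OPT}_{\mathrm{len}}(V_i)/D)$ and feed this through the length bound so that the count term $O(|\mathcal{C}_i|/b)=O(\alpha N_i D/M_i)$ is again $O(\alpha)\Ropt$. Combining, $R_i=O(\alpha)\Ropt$ for every level, and since there are $\lceil\log_2\rho\rceil$ levels, summing gives $R=O(\alpha\lceil\log\rho\rceil)\Ropt$.

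The step I expect to be the main obstacle is exactly this reconciliation of the RMCCP \emph{count} with $\Ropt$ across the two discharge regimes. The algorithm commits to a cover through an approximation that controls the \emph{number} of cycles, whereas the lower bound produced by Lemma~\ref{lem:yann} is about \emph{total length}; bridging the two requires the structural facts that a minimum-length rooted cover can be cut into $O(\mathrm{OPT}_{\mathrm{len}}/D)$ cycles of length at most $D$ (using Assumption~\ref{assumptions}-\ref{assumption2} to attach detours to $\mu$) and that the algorithm's total cycle length $L_i$ is itself $O(\alpha\,\mathrm{OPT}_{\mathrm{len}}(V_i))$. Making the factor of $b$ cancel cleanly, and checking that the boundary case $|\mathcal{C}_i|<b$ (a single group carrying all cycles) does not break the accounting, is where the care is needed; the additive $+1$ per level from the ceilings contributes only $O(\log\rho)\le O(\alpha\log\rho)\Ropt$ and is harmless.
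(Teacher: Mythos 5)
Your skeleton is the same as the paper's: partition vertices by latency scale, apply Lemma~\ref{lem:yann} to turn the optimum on each level $V_i$ into concatenations of cycles rooted at $\mu$, compare that cycle cover against the output of the $\alpha$-approximate RMCCP, group the returned cycles into walks of $b$ cycles, and place equally spaced robots on each walk; your feasibility argument for items~1 and~2 is also the paper's. Where you diverge is the per-level accounting. Writing $M_i = r_{\texttt{min}}2^{i}$, the paper's argument is purely count-based: the converted optimum shows that RMCCP returns at most $\alpha\lceil 2M_i/D\rceil\Ropt$ cycles, these fit into $2\alpha\Ropt$ walks of $b=\max\{1,\lfloor 2M_i/D\rfloor\}$ cycles each, and each walk has period at most $bD\le 2M_i$, so at most $4$ robots per walk suffice because $r(v)\ge M_i/2$ on $V_i$. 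You instead split the robot count into a length term $O(L_i/M_i)$ and a count term $O(|\mathcal{C}_i|/b)$ and try to match each against a separate lower bound on $\Ropt$.

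The length term is where your proof has a genuine gap, and it is exactly the ``structural fact'' you flag but do not prove: $L_i = O(\alpha\,\mathrm{OPT}_{\mathrm{len}}(V_i))$ is not available. The RMCCP guarantee controls only the \emph{number} of cycles; each returned cycle may legally have length up to $D$, so the only derivable bound is $L_i\le |\mathcal{C}_i|\,D\le \alpha N_i D\le \alpha\bigl(2\,\mathrm{OPT}_{\mathrm{len}}(V_i)+D\bigr)$, and the additive $\alpha D$ cannot be absorbed. Concretely, in the regime $D>2M_i$ (so $b=1$), Assumption~\ref{assumptions}-\ref{assumption2} forces every $v\in V_i$ to lie within $M_i/2$ of $\mu$, so one can build instances with $|V_i|\approx D/M_i$ vertices in two nearby clusters where $\mathrm{OPT}_{\mathrm{len}}(V_i)=\Theta(M_i)$ and $\Ropt=O(1)$, yet a \emph{count-optimal} RMCCP output is a single cycle alternating between the clusters with length $\Theta(D)$; your length term then pays $\Theta(D/M_i)$ robots, which no pairing with $\Ropt\ge \mathrm{OPT}_{\mathrm{len}}(V_i)/(2M_i)$ can cover. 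Note that the paper avoids length accounting precisely because of its period bound $bD\le 2M_i$ --- although that bound is itself vacuous when $b=1$ arises from the max rather than the floor, so the paper's ``$4$ robots per walk'' step implicitly assumes every returned cycle has length at most $2M_i$. The clean way to close the argument (yours or the paper's) is to run RMCCP with cycle-length bound $\min\{D,2M_i\}$: the converted optimum from Lemma~\ref{lem:yann} remains feasible for it, since each of its cycles has length at most $\min\{D,2M_i\}$, and then every walk's period is at most $2M_i$ and the count-based accounting finishes in all regimes.
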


\begin{proof}
Given that $\Ropt$ robots will satisfy the latency constraints $r(v)$, they will also satisfy the relaxed constraints $\bar{r}(v)$ since $\bar{r}(v)>r(v)$. Therefore, there exists a set of at most $\Ropt$ walks $\W^*$ such that for $v\in V_i$, $L(\W^*,v)\leq r_{\texttt{min}}2^{i}$, and each walk satisfies the recharging constraint. 

Using Lemma~\ref{lem:yann}, given the set $\W^*$, a set of $\Ropt$ walks can be constructed in $V_i$ such that the latency of each vertex in $V_i$ is at most $r_{\texttt{min}}2^{i+1}$ and the recharging constraint is satisfied for each robot. Moreover, each walk in this set is a concatenation of cycles of length at most $D$ rooted at $\mu$. Note that the maximum number of cycles assigned to a walk is $\lceil ( r_{\texttt{min}}2^{i+1}/D ) \rceil$. Hence, running an $\alpha$ approximation algorithm for Rooted Minimum Cycle Cover Problem (RMCCP) on the subgraph with vertices $V_i$ and with maximum cycle length $D$ will not return more than $\alpha \lceil ( r_{\texttt{min}}2^{i+1}/D ) \rceil \Ropt$ rooted cycles.

We can assign $\max\{1,\lfloor ( r_{\texttt{min}}2^{i+1}/D ) \rfloor\}$ cycles to each walk such that the latency of each vertex in the assigned vertices is at most $r_{\texttt{min}}2^{i+1}$. Therefore, we can assign all the $\alpha \lceil ( r_{\texttt{min}}2^{i+1}/D ) \rceil \Ropt$ cycles to $2\alpha \Ropt$ walks. Since each walk is independent of other walks and is a concatenation of cycles, equally spacing $k$ robots within a period of each such walk will reduce the latency of each vertex on that walk by a factor of $k$. As $r(v) \geq  r_{\texttt{min}}2^{i+1}/4$ for each $v\in V_i$, we will need to place at most $4$ robots on each such walk to satisfy the latency constraints.

Finally, since there are at most $\lceil\log \rho\rceil$ partitions, we will need $8\alpha \lceil\log(\rho)\rceil \Ropt$ robots. As the approximation ratio for RMCCP is $O(\min\{\log n ,\frac{\log D}{\log \log D}\})$, the algorithm will return $R \leq O(\min\{\log n,\frac{\log D}{\log \log D}\} \lceil\log(\rho)\rceil) \Ropt$ walks.
\end{proof}

\textbf{Runtime:} Since we run the approximation algorithm for RMCCP on partitions of the graph, Algorithm~\ref{alg:approx_latency} has the same time complexity as that of the approximation algorithm of RMCCP. That is because the runtime of RMCCP is superlinear, so if $\sum |V_i| = |V|$, then $\sum |V_i|^p \leq |V|^p $ for $p\geq 1$.

\subsection{Infinite Discharge Time}
For the case when the robots do not need to recharge, for example in situations where the discharge time of the robots is greater than the total mission length, we can improve the approximation ratio by using the Minimum Cycle Cover Problem instaed of its rooted version. The Line~\ref{algln:MCCP} of Algorithm~\ref{alg:approx_latency} can be replaced by MCCP$(V_i,r_{\texttt{min}}2^{i+1})$ to get a set of cycles in partition $V_i$ such that the length of the longest cycle is at most $r_{\texttt{min}}2^{i+1}$. Multiple robots can then be placed on each of the resulting cycles as in Line~\ref{algln:final} of the algorithm to get the solution. The approximation ratio of this solution is given below.

\begin{proposition}
\label{prop:approx_no_recharge}
Given an undirected metric graph $G=(V,E)$ with latency constraints $r(v)$ for $v\in V$, there exists an algorithm that constructs $R$ walks $\W=\{W_1,W_2,\ldots, W_R\}$ such that $L(\W,v)\leq r(v)$ for all $v\in V$ and $R \leq 4\alpha\lceil\log(\rho) \rceil\Ropt$, where $\Ropt$ is the minimum number of robots required to satisfy the latency constraints and $\alpha$ is the approximation factor of MCCP.
\end{proposition}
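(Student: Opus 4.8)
The plan is to mirror the proof of Proposition \ref{prop:approx} almost verbatim, but to replace every appeal to the rooted cycle cover (RMCCP) and the depot constraint with the unrooted Minimum Cycle Cover Problem (MCCP). The cleaner bound (dropping the $\log D/\log\log D$ factor down to a constant) comes from the fact that, without the recharging constraint, we no longer need to cut cycles at $\mu$ nor pay the factor coming from concatenating up to $\lceil r_{\texttt{min}}2^{i+1}/D\rceil$ rooted cycles into a single battery-feasible walk. I expect this to be where the improvement is actually harvested.

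First I would partition $V$ into the levels $V_i = \{v : r_{\texttt{min}}2^{i-1}\le r(v) < r_{\texttt{min}}2^i\}$ for $1\le i\le \lceil\log_2\rho\rceil$, exactly as in Algorithm \ref{alg:approx_latency}, so that within each level all latency constraints agree up to a factor of two. Next I would invoke the no-recharge analogue of Lemma \ref{lem:yann}: given the (at most) $\Ropt$ walks that satisfy the relaxed constraints $\bar r(v)$, restricting attention to level $V_i$ and looking at the vertices each walk visits in a window of length $r_{\texttt{min}}2^i$ yields a cover of $V_i$ by short closed walks whose induced cycles have length at most $2\,r_{\texttt{min}}2^i = r_{\texttt{min}}2^{i+1}$, and the resulting max latency on $V_i$ is at most $r_{\texttt{min}}2^{i+1}$. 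The subtlety is that Lemma \ref{lem:yann} as stated roots its cycles at $\mu$; here there is no depot, so I would note that the same window-extraction argument produces unrooted cycles (we simply close the visited segment up directly rather than routing through $\mu$), and the doubling of the latency is unaffected. Consequently an $\alpha$-approximation for MCCP with cycle-length bound $r_{\texttt{min}}2^{i+1}$ returns at most $\alpha\,\Ropt$ cycles covering $V_i$.

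The key gain over Proposition \ref{prop:approx} is that each returned cycle is already a stand-alone closed walk of length at most $r_{\texttt{min}}2^{i+1}$; there is no battery budget $D$ forcing us to bundle $\lfloor r_{\texttt{min}}2^{i+1}/D\rfloor$ cycles per walk, so the factor-two blowup from rounding $\lceil\cdot\rceil$ to $\lfloor\cdot\rfloor$ in the recharging argument disappears. On each such cycle I would then equally space robots as in Line \ref{algln:final}: since the cycle has length at most $r_{\texttt{min}}2^{i+1}$ and every $v\in V_i$ has $r(v)\ge r_{\texttt{min}}2^{i-1} = r_{\texttt{min}}2^{i+1}/4$, placing at most $4$ robots on each cycle drives the latency below every constraint in that cycle, using the fact (noted before Lemma \ref{lem:yann} and contrasted with the walk example of Figure \ref{fig:example1}) that equal spacing on a \emph{cycle} does divide the latency by the number of robots.

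Summing over the at most $\lceil\log\rho\rceil$ levels then gives at most $4\alpha\lceil\log\rho\rceil\,\Ropt$ robots, which is the claimed bound. The main obstacle I anticipate is not any single calculation but rather stating the no-recharge version of Lemma \ref{lem:yann} carefully: Lemma \ref{lem:yann} is phrased with a depot and discharge time, and I must be sure that deleting $\mu$ from the construction still yields genuine cycles with the length bound $r_{\texttt{min}}2^{i+1}$ and the factor-two latency guarantee, rather than open walks for which equal spacing would fail. Once that lemma is in hand, the rest is a direct transcription of the Proposition \ref{prop:approx} argument with the RMCCP-to-MCCP substitution and the removal of the $D$-dependent bundling step.
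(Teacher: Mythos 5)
Your proposal is correct and matches the paper's intended argument: the paper itself proves this proposition simply by pointing back to the proof of Proposition~\ref{prop:approx} with MCCP$(V_i, r_{\texttt{min}}2^{i+1})$ substituted for RMCCP, which is exactly the transcription you carried out, including the unrooted (Chevaleyre-style) version of Lemma~\ref{lem:yann} and the factor-$4$ bound from $r(v)\geq r_{\texttt{min}}2^{i+1}/4$. You also correctly identified where the improvement from $8\alpha$ to $4\alpha$ comes from, namely dropping the ceiling/floor bundling of cycles into battery-feasible walks.
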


The proof of this proposition follows from the proof of Proposition~\ref{prop:approx}.

\begin{remark}[Heuristic Improvements]
Instead of finding cycles using MCCP for each partition $V_i$, we can also equally space robots on the Traveling Salesman Tour of $V_i$ to get a feasible solution. In practice, we use both of these methods and pick the solution that gives the lower number of robots for each $V_i$. This modification can return better solutions to the problem but does not improve the approximation guarantee established in Proposition~\ref{prop:approx_no_recharge}.
\end{remark}

\section{Heuristic Algorithms}
\label{sec:latency-greedy}
The approximation algorithm for Problem~\ref{pbm:min_robs} presented in Section~\ref{sec:latency-approx} is guaranteed to provide a solution within a fixed factor of the optimal solution. In this section, we propose a heuristic algorithm based on the Orienteering Problem, which in practice provides high-quality solutions.
\subsection{Partitioned Solutions}
In general, walks in a solution of the problem may share some of the vertices. However, sharing the vertices by multiple robots requires coordination and communication among the robots. Such strategies may also require the robots to hold at certain vertices for some time before traversing the next edge, in order to maintain synchronization. This presents difficulties for vehicles that must maintain forward motion, such as fixed-wing aircraft. The following example illustrates that if vertices are shared by the robots, lack of coordination or perturbation in edge weights can lead to large errors in latencies.

\begin{example} Consider the problem instance without recharging constraints shown in Figure~\ref{fig:example2}. An optimal set of walks for this problem is given by $\{W_1,W_2,W_3\}$ where $W_1 = ((a,1),(b,1))$, $W_2=((b,0),(c,0))$ and $W_3 = ((c,0),(d,1),(c,1))$. Note that walk $W_1$ starts by staying on vertex $a$, while $W_2$ leaves vertex $b$ and $W_3$ leaves vertex $c$. If the length of edge $\{b,c\}$ changes from $3$ to $3-\epsilon$, (e.g., if the robot's speed increases slightly) the latencies of vertices $b$ and $c$ will keep changing with time and will go up to $5$. Hence, a small deviation in robot speed can result in a large impact on the monitoring objective. Also note that any partitioned solution will need $4$ robots. 
\begin{figure}[ht]
\centering
\includegraphics[width=.6\linewidth]{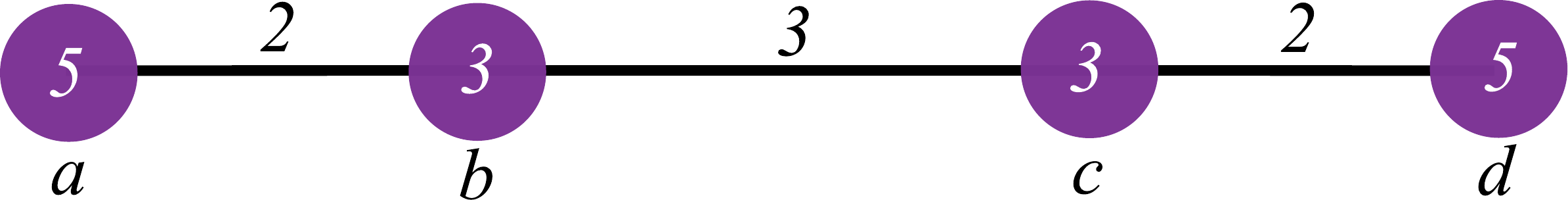}
\caption[A problem instance with an optimal set of walks that share vertices.]{A problem instance with an optimal set of walks that share vertices. The latency constraints for each vertex are written inside that vertex. The edge lengths are labeled with the edges. The optimal walks are $\{W_1,W_2,W_3\}$ where $W_1 = ((a,1),(b,1))$, $W_2=((b,0),(c,0))$ and $W_3 = ((c,0),(d,1),(c,1))$.}
\label{fig:example2}
\end{figure}
\end{example}
Since the above mentioned issues will not occur if the robots do not share the vertices of the graph, and the problem is PSPACE-complete even for a single robot, we focus on finding partitioned walks in this section. The general approach used in this section is to find a single walk that satisfies latency constraints on a subset of vertices $V'\subseteq V$. Note that we do not know $V'$ beforehand, but a feasible walk on a subset of vertices will determine $V'$. We then repeat this process of finding feasible walks on the remaining vertices of the graph until the whole graph is covered.

\subsection{Greedy Algorithm}
\label{sec:simple_greedy}
We now consider the problem of finding a single walk on the graph $G=(\{V\cup \mu\})$ that satisfies the latency constraints on the vertices in $V'\subseteq V$. Given a robot walking on a graph, let $p(k)$ represent the vertex occupied by the robot after traversing $k$ edges (after $k$ steps) of the walk. Also, at step $k$, let $c(k)$ represent the current time to discharge, and let the maximum time left until a vertex $i$ has to be visited by the robot for its latency to be satisfied be represented by $s_i(k)$. If that vertex is not visited by the robot within that time, we say that the vertex expired. Hence, the vector $s(k)=[s_1(k),\ldots,s_{|V'|}(k)]^T$ represents the time to expiry for vertices in $V'$. The walk starts form the depot $\mu$ and at the start of the walk, $s_i(0)=r(i)$ and $c(0)=D$. The values $s_i(k)$ and $c(k)$ evolve according to the following equations:
\begin{equation}
\label{eq:slackupadate}
s_i(k) = \begin{cases}
r(i) \quad  &\quad \text{if } p(k)=i \\
s_i(k-1)- \ell(p(k-1),p(k)) & \quad \text{otherwise}. \\
\end{cases}
\end{equation}

\begin{equation}
c(k) = \begin{cases}
D \quad  &\quad \text{if } p(k)=\mu \\
c(k-1)- \ell(p(k-1),p(k)) & \quad \text{otherwise}. \\
\end{cases}
\end{equation}

As we are considering the problem of finding a walk for a single robot, the recharging and latency constraints are $L(W,\mu)\leq D$ and $L(W,v)\leq r(v)$ for all $v$, respectively. Since $s_i(k)$ and $c(k)$ evolve identically if we set $r(\mu) = D$ and $s_\mu(k) = c(k)$, we can deal with the recharging constraint as another latency constraint. We will use the notation $s_i$ without the step $k$ if it is clear that we are talking about the current time to expiry. 

An incomplete greedy heuristic for the decision version of the problem with $R=1$ and no recharging constraints is presented in~\cite{las2013persistent}. The heuristic is to pick the vertex with minimum value of $s_i(k)$ as the next vertex to be visited by the robot. This heuristic does not ensure that all the vertices on the walk will have their latency constraints satisfied since the distance to a vertex $i$ to be visited might get larger than $s_i(k)$. We propose a modification to the heuristic in order to get feasible solutions. Given a walk $W$ on graph $G$, a function \textsc{PeriodicFeasibility$(W,G)$} determines whether the periodic walk $\Delta(W)$ is feasible on the vertices that are visited by $W$. This can be done simply in $O(|W|)$ time by traversing the walk $[W, W]$ and checking if the time to expiry for any of the visited vertices becomes negative. Given this function, the greedy algorithm is to pick the vertex $i=\argmin\{s_j\} $ subject to the constraint that \textsc{PeriodicFeasibility$([W , i],G)$} returns true, where $W$ is the walk traversed so far. The algorithm terminates when all the vertices are either expired, or covered by the walk. An example of a step of the greedy algorithm is depicted in Figure~\ref{fig:latency-greedy}.
\begin{figure}
\centering
  \includegraphics[width=.4\linewidth]{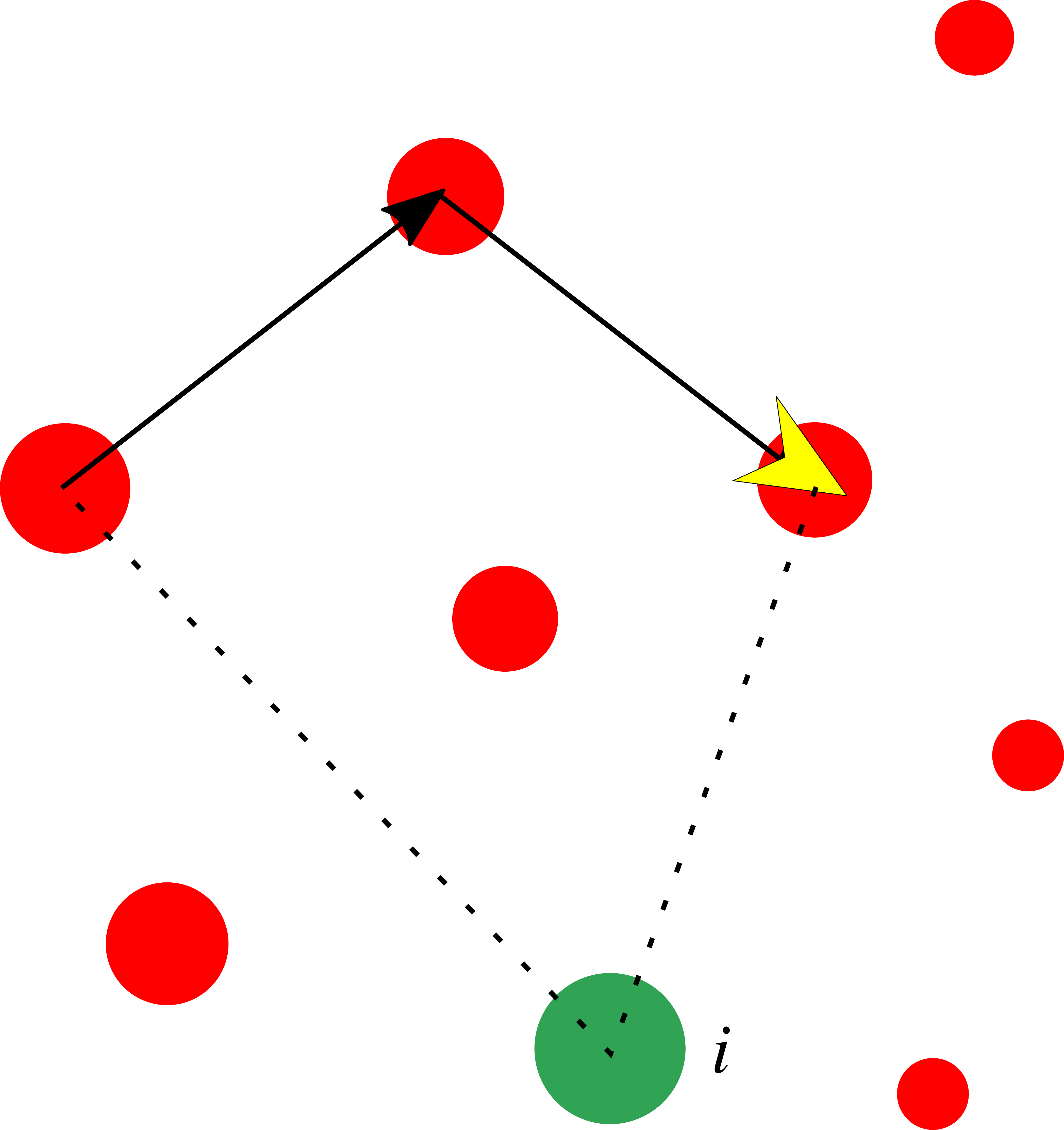}
\caption[Example of a step of the greedy algorithm]{An example depicting a step of the greedy algorithm. The solid black lines show the existing walk $W$, and the yellow arrow shows the current position of the robot. The green vertex $i$ is the vertex with least time to expiry. That vertex is appended to the existing walk and the closed walk is checked for feasibility before appending $i$ to $W$.}
\label{fig:latency-greedy}
\end{figure}

\subsection{Recursive Greedy Algorithm}
The greedy algorithm presented above selects the vertex with the minimum time to expiry and adds it to the current walk if the resulting walk is feasible. In this section we extend the greedy algorithm and instead of going directly to vertex $i=\argmin\{s_j\}$, we check if other vertices can be visited on the way to vertex $i$. This is done greedily as well, where we pick the vertex $j\neq i$ with the minimum time to expiry such that the walk $[W,j,i]$ remains feasible on the vertices visited by the walk. This is done recursively until no more vertices can be added to the walk. We refer to this algorithm as \textsc{RecursiveGreedy}.

\subsection{Orienteering Based Greedy Algorithm}
In this section we take the idea of the \textsc{RecursiveGreedy} algorithm one step further and try to visit the `best' combination of vertices on the way to vertex $i=\argmin\{s_j\}$. This algorithm also finds partitioned walks like the previous two heuristic algorithms by finding a feasible walk on a subset of vertices and then considering the remaining subgraph. The algorithm is presented in Algorithm~\ref{alg:greedy}. From the current vertex $x$, the target vertex $y$ is picked greedily as described in Section~\ref{sec:simple_greedy}. Then the time $d$ is calculated in Line~\ref{algln:dist} which is the maximum time to go from $x$ to $y$ for which the periodic walk remains feasible. In Line~\ref{algln:app_walk}, \textsc{Orienteering}$(V-V_{\texttt{exp}},x,y,d,\psi)$ finds a path in the vertices $V-V_{\texttt{exp
}}$ from $x$ to $y$ of length at most $d$ maximizing the sum of the weights $\psi$ on the vertices of the path. The set $V_{\texttt{exp
}}$ represents the expired vertices whose latencies cannot be satisfied by the current walk, and they will be considered by the next robot. The vertices with less time to expiry are given more importance in the path by setting weight $\psi_i=1/s_i$ for vertex $i$. The vertices that are already in the walk will remain feasible, and so their weight is discounted by a small number $m$ to encourage the path to explore unvisited vertices. One step of the algorithm is depicted in Figure~\ref{fig:latency_orienteering_example}. The following result shows that this algorithm will always find a feasible solution.
\begin{figure}
\centering
\begin{minipage}[t]{.475\columnwidth}
  \centering
  \includegraphics[width=.8\linewidth]{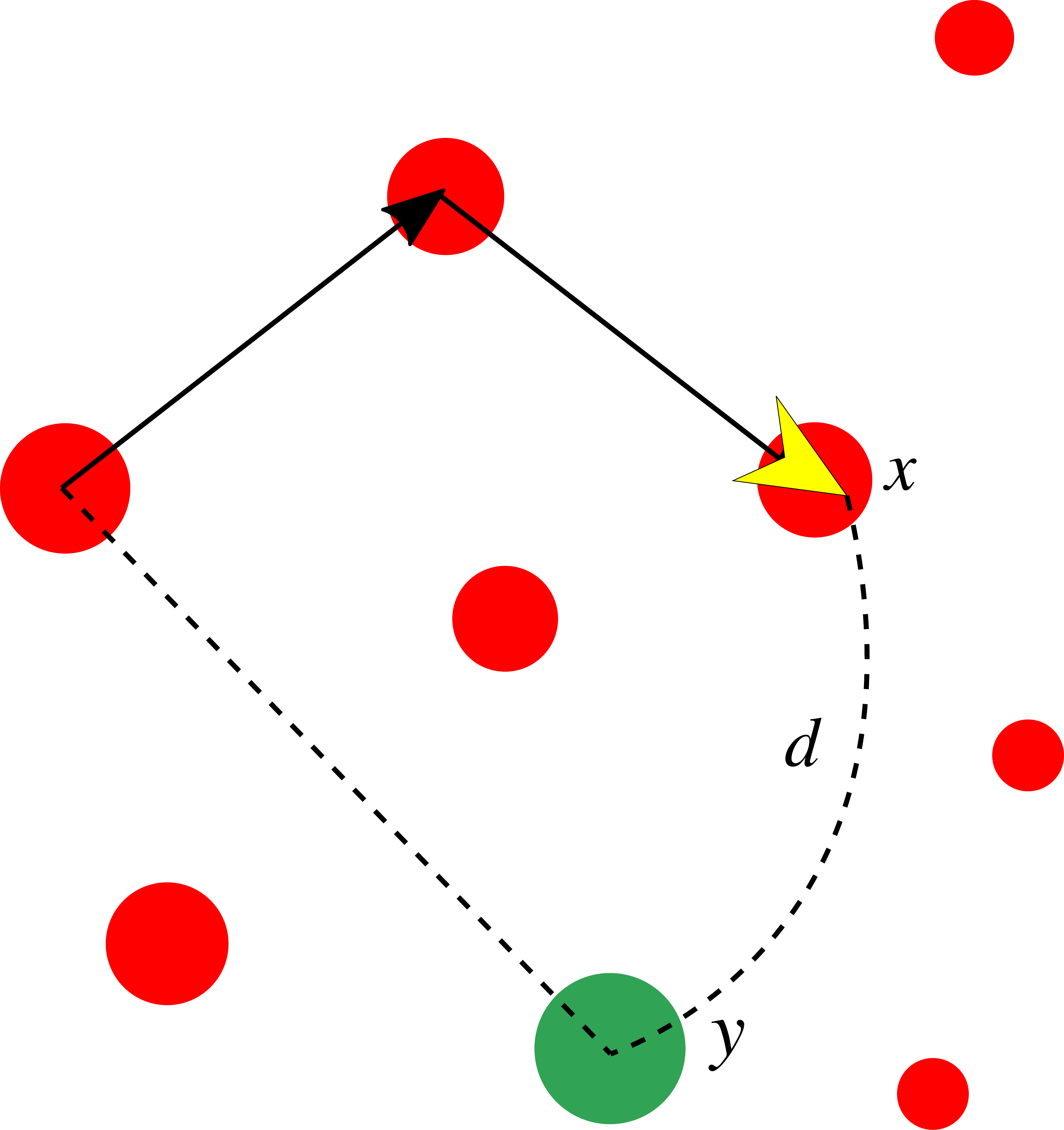}
\end{minipage}\hfill
\begin{minipage}[t]{.475\columnwidth}
  \centering
  \includegraphics[width=.8\linewidth]{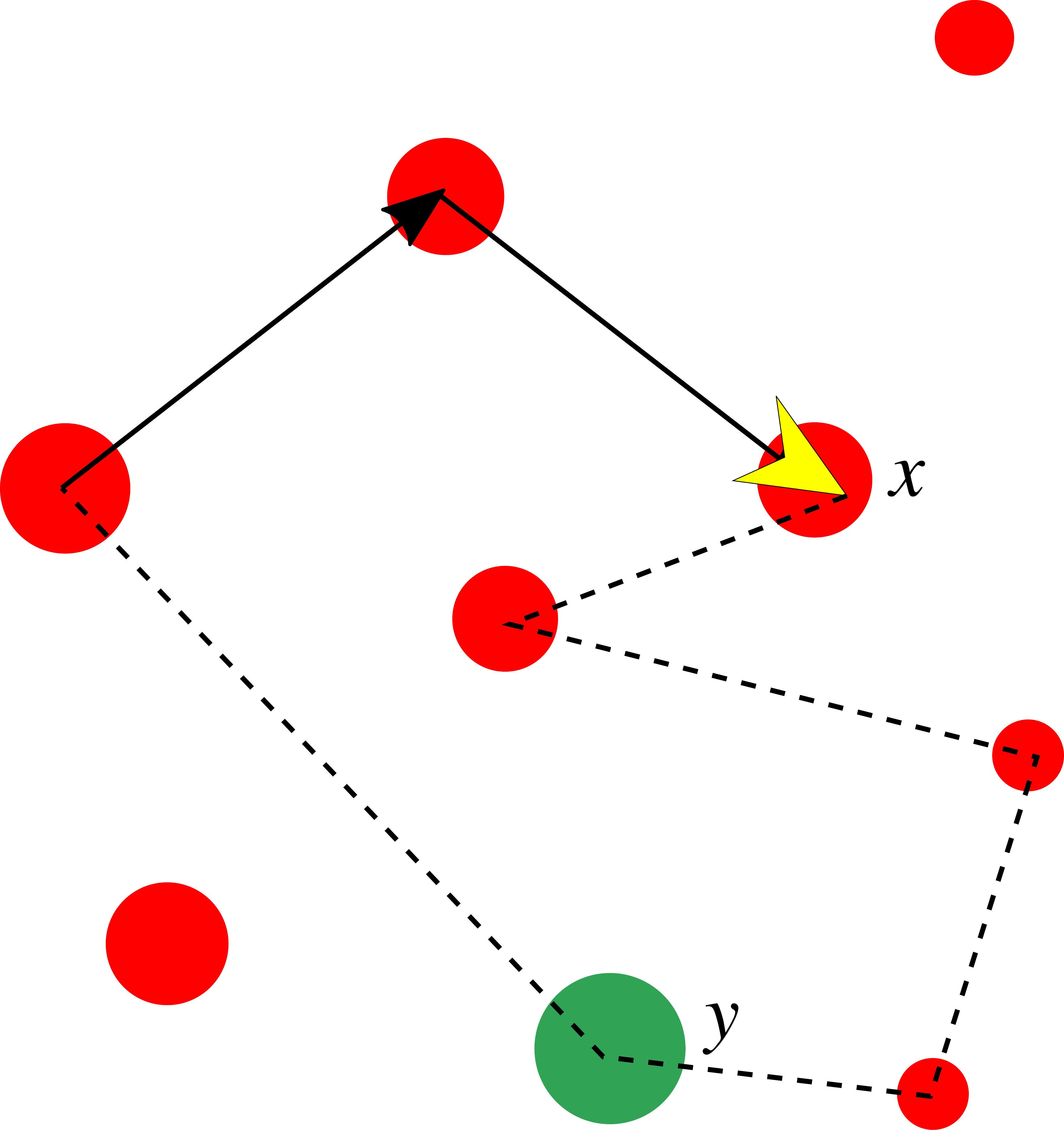}
\end{minipage}
\caption[Orienteering based greedy algorithm]{A step of the \textsc{OrienteeringGreedy} algorithm. The solid lines represent the walk traversed so far. Yellow arrow is the current position of the robot. Vertex $y$ is picked greedily. The figure on the left shows the time $d$ that can be spent before going to $y$. The figure on the right shows the orienteering path from $x$ to $y$.}
\label{fig:latency_orienteering_example}
\end{figure}

\begin{algorithm}
 
 \DontPrintSemicolon
\caption{\textsc{OrienteeringGreedy}}
\label{alg:greedy}

	\KwIn{ Graph $G=(\{V\cup \mu\},E)$, discharging time $D$, latency constraints $r(v), \forall v$}
	\KwOut{ A set of $R$ walks $\W$, such that $L(\W,v)\leq r(v)$, $\forall v\in V$ and $L(W_k,\mu)\leq D$, $\forall k\in \{1,2,\ldots,R\}$ }
  \vspace{0.2em}
  \hrule
	$j=1$, $\W = \{\}$\;
	\While {$V$ is not empty}
 {
	$V_{\texttt{exp}}=\{\}$\;
	$r(\mu) =D$\;
	$s_i = r(i)$ for all $i\in \{V\cup \mu\}$\;
	$W_j = \mu$\;
	\While {$V-V(W_j)-V_{\texttt{exp}}$ is not empty\label{algln:while}}
 {
 
	$x = $ last vertex in $W_j$\;
	\For {$y\in V\cup \mu-V_{\texttt{exp}}$ \textup{in increasing order of $s$}}
 
 {
	\eIf {\textsc{PeriodicFeasibility$([W_j , y],G)$}}
 {
	Use binary search between $\ell(x,y)$ and $s_y$ to get $d$ (time to go from $x$ to $y$) such that $[W_j,y]$ remains feasible\; \label{algln:dist}
	\For {$z$ in $V-(V_{\texttt{exp}}\cup V(W_j))$}
 {
	\If {$s_z < d + \ell(y,\mu)$} {$V_{\texttt{exp}}=V_{\texttt{exp}}\cup z$\; \label{algln:newif}}
	}

	$\psi_i=1/s_i$ for all non expired vertices $i$\;
	$\psi_i = m \psi_i$ for $i$ in $V(W_j)$\;
	$W_j = [W_j, $\textsc{Orienteering}$(V-V_{\texttt{exp}},x,y,d,\psi)]$\; \label{algln:app_walk}
	Update $s$ using Equation~(\ref{eq:slackupadate})\;
 \textbf{break}\;
 }
	{
	 $V_{\texttt{exp}}=V_{\texttt{exp}}\cup y$\;
	}
	}
 
 }
	 $\W = \{\W,W_j\}$\;
  $j = j+1$\; \label{algln:sol_set}
	$V = V-V(W_j)$\; \label{algln:reduce_size}

}
\end{algorithm}

\begin{proposition}
Algorithm~\ref{alg:greedy} returns a feasible solution, i.e., for the set of walks $\W$ returned by Algorithm~\ref{alg:greedy}, $L(\W,v)\leq r(v)$, for all $v\in V$, and $L(W_k,\mu)\leq D$ for each robot $k$.
\end{proposition}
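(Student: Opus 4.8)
The plan is to reduce the claim to two facts: (i) the algorithm terminates with every vertex of $V$ placed in exactly one walk, and (ii) for each walk $W_j$ produced, the periodic walk $\Delta(W_j)$ satisfies $L(\Delta(W_j),v)\le r(v)$ for every $v\in V(W_j)$, where the recharging constraint is folded in by setting $r(\mu)=D$. Because the solution is partitioned (Line~\ref{algln:reduce_size} removes $V(W_j)$ from $V$), every $v\in V$ is monitored by a single walk, so $L(\W,v)=L(\Delta(W_j),v)\le r(v)$; and since $\mu$ begins every walk, fact (ii) applied to $\mu$ gives $L(W_k,\mu)\le D$ for every $k$. The bulk of the work is fact (ii).

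I would prove (ii) by induction on the iterations of the inner \textbf{while} loop, maintaining the invariant that after each successful append the current $\Delta(W_j)$ is feasible on its visited vertices. The base case $W_j=(\mu)$ is immediate. For the inductive step, note that before appending, the algorithm verifies \textsc{PeriodicFeasibility}$([W_j,y],G)$ with the $x\to y$ transition set to the binary-searched time $d$ (Line~\ref{algln:dist}), so $\Delta([W_j,y])$ is feasible and has period $T_{\mathrm{old}}=\ell(W_j)+d+\ell(y,\mu)$. The append replaces the direct $x\to y$ hop by an orienteering path $P$ from $x$ to $y$ of length $\ell(P)\le d$, giving period $T_{\mathrm{new}}=\ell(W_j)+\ell(P)+\ell(y,\mu)\le T_{\mathrm{old}}$. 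The key observation is that for any vertex already in $W_j$ (and for $\mu$), the internal inter-visit gaps are unchanged while the single gap that wraps through the $x\to y$ segment only shrinks when $d$ is replaced by $\ell(P)$; hence every such gap stays below its value in the feasible walk $\Delta([W_j,y])$, i.e.\ below the corresponding latency. For a newly visited vertex $z\in V(P)$, the filter in Line~\ref{algln:newif} has already discarded every $z$ with $s_z<d+\ell(y,\mu)$; since $z$ was previously unvisited in this walk we have $s_z=r(z)-\ell(W_j)$, so the surviving vertices satisfy $r(z)\ge \ell(W_j)+d+\ell(y,\mu)\ge T_{\mathrm{new}}$. As $z$ is visited at least once per period, its inter-visit gap is at most $T_{\mathrm{new}}\le r(z)$, so its latency constraint holds. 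This closes the induction, and when the inner loop exits $\Delta(W_j)$ is feasible on all of $V(W_j)$.

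Finally I would establish (i). Resetting $V_{\texttt{exp}}=\varnothing$ at the start of each outer iteration, the first target $y$ chosen from $\mu$ is always feasibly appendable: by the two parts of Assumption~\ref{assumptions} the cycle $\Delta((\mu,y))$ has period $2\ell(\{\mu,y\})\le\min\{r(y),D\}$, and the orienteering call with a strictly positive budget and positive scores returns a path covering at least one previously uncovered vertex. Thus each outer iteration removes at least one vertex from $V$, so the loop halts after at most $|V|$ iterations with every vertex covered. I expect the main obstacle to be the inductive step of fact (ii): one must argue rigorously that detouring through the orienteering vertices cannot violate the constraints of the already-covered vertices or of $\mu$. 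The clean way around this is the period-monotonicity observation above ($T_{\mathrm{new}}\le T_{\mathrm{old}}$ because $\ell(P)\le d$), which confines the only affected gap to the wrap-around segment and lets the pre-verified feasibility of $\Delta([W_j,y])$ do the rest; pairing this with the exact role of the expiry threshold $d+\ell(y,\mu)$ in Line~\ref{algln:newif} is the crux of the argument.
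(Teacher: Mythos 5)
Your proposal is correct and follows essentially the same argument as the paper's proof: induction on the appends in Line~\ref{algln:app_walk}, using the binary-searched budget $d$ from Line~\ref{algln:dist} to preserve feasibility of already-visited vertices (including $\mu$ with $r(\mu)=D$), and the expiry filter of Line~\ref{algln:newif} to bound $r(z)$ below by the new period for vertices first visited on the orienteering path. Your period-monotonicity observation ($T_{\mathrm{new}}\le T_{\mathrm{old}}$ since $\ell(P)\le d$) and the termination/coverage argument are just more explicit renderings of steps the paper states tersely, not a different route.
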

\begin{proof}
The vertices covered by the walk $W_j$ added to the solution in Line~\ref{algln:sol_set} are removed from the set of vertices before finding the rest of the walks. Hence the latencies of the vertices $V(W_j)$ will be satisfied by only $W_j$. We will show that every time $W_j$ is appended in Line~\ref{algln:app_walk}, it remains feasible on $V(W_j)$.

$W_j$ starts from the vertex $\mu$, and hence is feasible at the start. Let us denote $W_j^-$ as the walk before Line~\ref{algln:app_walk} and $W_j^+$ as the walk after Line~\ref{algln:app_walk}. Due to Line~\ref{algln:dist}, if $W_j^-$ is feasible in a particular iteration, then $[W_j^-,y]$ will remain feasible. Hence the only vertices than can possibly have their latency constraints violated in $W_j^+$ are in the orienteering path from $x$ to $y$. Consider any vertex $z$ in the path from $x$ to $y$ returned by the \textsc{Orienteering} function in Line~\ref{algln:app_walk}. If $z\in V(W_j^-)$, then $L(W_j^+,z)\leq r(z)$ because of Line~\ref{algln:dist}. If $z\notin V(W_j^-)$, then $r(z)=\ell(W_j^-) + s_z$ and by Line~\ref{algln:newif}, $r(z)\geq \ell(W_j^-) + d + \ell(y,\mu)$. As $z$ is only visited once in $W_j^+$, $ L(W_j^+,z) = \ell(W_j^+) \leq \ell(W_j^-)+d+\ell(y,a)  \leq r(z)$.

Finally, since each walk starts from $\mu$ and the vertices on each walk have their latency constraints satisfied by that walk, setting $r(\mu)=D$ ensures that the recharging constraint is satisfied for each walk.
\end{proof}

An approximation algorithm for \textsc{Orienteering} can be used in Line~\ref{algln:app_walk} of Algorithm~\ref{alg:greedy}. In our implementation, we used an ILP formulation to solve \textsc{Orienteering}. To improve the runtime in practice, we pre-process the graph before calling the \textsc{Orienteering} solver to consider only the vertices $z$ such that $\ell(x,z)+\ell(z,y)\leq d$. We show in the next section that although the runtime of Algorithm~\ref{alg:greedy} is more than that of Algorithm~\ref{alg:approx_latency}, it can still solve instances with up to $100$ vertices in a reasonable amount of time, and it finds better solutions than the approximation algorithm.

\section{Min Max Weighted Latency Problem}
\label{sec:latency-minmax}
The approximation algorithm and analysis presented previously aid in developing a solution for the multi-robot version of the problem of minimizing the maximum weighted latency. The single robot and multi-robot versions of this problem without recharging constraint are analyzed in~\cite{Alamdari2014} and~\cite{afshani2021approximation} respectively.

The problem of minimizing the maximum weighted latency with recharging constraint is formally defined below.

\begin{definition}[Weighted Latency]
\label{dfn:weighted_latency} Given a graph $G=(\{V\cup\mu\},E)$ with weights $\phi(v)$ for $v\in V$, and a set of walks $\mathcal{W}$, the weighted latency of $v$ is defined as $C(\W,v)=\phi(v)L(\W,v)$. 
\end{definition}

\begin{problem}[Minimizing Maximum Weighted Latency]
\label{pbm:weighted_latency} Given $R$ robots, a graph $G=(\{V\cup \mu\},E)$ with weights $\phi(v)$ for $v\in V$, where $\mu$ is the recharging vertex, and a discharge time $D$ for the robots, find a set of $R$ feasible walks (no robot spends more than $D$ time away from the recharging depot, i.e., $L(W_k,\mu)\leq D$ for each robot $k$) $\W=\{W_1,W_2,\ldots,W_R\}$ such that the cost $\max_v C(\W,v)$ is minimized.
\end{problem}

Without loss of generality, $\phi(v)$ is assumed to be normalized such that $\max_v\phi(v)=1$. In this section we present an algorithm for this problem and relate this problem to Problem~\ref{pbm:min_robs}.

We first present an algorithm for the single robot version of the problem. To the best of our knowledge, no approximation algorithm for the min-max weighted latency problem with recharging constraints exists, even for the single robot version. 

We start with the special case, where all the vertices in the graph have same weight. The following result shows that an approximation algorithm for the RMCCP provides an approximation for this case as well. 

\begin{algorithm}[t]
\DontPrintSemicolon
\caption{\textsc{MinMaxLatencyOneRobot}}
\label{alg:approx_min_max}
\KwIn{Graph $G=(\{V\cup\mu\},E)$ with weights $\phi_v,  \forall v$, discharging time $D$}
	\KwOut{ A walk $S$}
  \vspace{0.2em}
  \hrule
	Let $V_{i}$ be the set of vertices $v$ such that  $ \phi_{\texttt{min}}2^{i-1}\leq \phi_v < \phi_{\texttt{min}}2^{i} $  for $1\leq i \leq \lceil \log_2 \rho \rceil $\;
	
  Let $t = 2^{\lceil \log{\rho}\rceil+1}$ \;
	
  $S_1,S_2,\ldots,S_t \leftarrow \{\}$ \;
	\For {$i=0$ to $\lceil \log \rho \rceil$}
	{Find RMCCP walk for $V_i$ with root $\mu$ and tour constraint $D$ \;\label{algln:rmccp_one_robot}
	 $\{W_{i,0},\ldots,{W_{i,2^i-1}}\} \leftarrow$ Partition RMCCP walk into $2^i$ walks \; \label{algln:walk_break}
	\For {$k=1$ to $t$}{
	 $S_k \gets [S_k,W_{i,j_i},\mu]$, where $j_i = k\mod 2^i$\;
	}
	}
	 $S\leftarrow [S_1,S_2,\ldots,S_t]$\;
\end{algorithm}

\begin{lemma}
\label{lem:one_robot_RMCCP}
    An $\alpha$ approximation algorithm to RMCCP is a $3\alpha$ approximation algorithm to the Problem~\ref{pbm:weighted_latency} with $R=1$ and $\phi(v)=1$ for all $v\in V$.
\end{lemma}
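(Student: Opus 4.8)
The plan is to specialize Algorithm~\ref{alg:approx_min_max} to the case $\phi(v)=1$ and then charge the latency of its output against a lower bound on the optimum that is built from the RMCCP solution. When every weight equals $1$ we have $\rho=1$, so there is a single class $V_0=V$, the index $t=2$, and each of $S_1,S_2$ receives the entire cycle-cover walk $W_{0,0}=[C_1,\ldots,C_m]$ produced in Line~\ref{algln:rmccp_one_robot}, where $C_1,\dots,C_m$ are the cycles rooted at $\mu$ of length at most $D$. Since each $C_j$ begins and ends at $\mu$, the output $S=[W_{0,0},\mu,W_{0,0},\mu]$ traverses $W_{0,0}$ twice per period of length $2\sum_j\ell(C_j)$, so every vertex is revisited after at most one copy of $W_{0,0}$, and the maximum latency of $\Delta(S)$ is at most $P:=\sum_{j=1}^{m}\ell(C_j)$. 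Because $C(\W,v)=L(\W,v)$ when $\phi\equiv 1$, the lemma reduces to showing $P\le 3\alpha\,L^{*}$, where $L^{*}$ is the smallest achievable maximum latency of a single recharging walk.

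First I would record two elementary lower bounds on $L^{*}$. Fix an optimal walk $W^{*}$ and any time window of length $L^{*}$; since every latency is at most $L^{*}$, every vertex is visited inside the window, so the portion $\pi$ of $W^{*}$ lying in the window is a walk of length at most $L^{*}$ that covers $V$. Hence $L^{*}$ is at least the length of a shortest covering walk. Moreover, because $W^{*}$ must return to $\mu$ at least once in every interval of length $D$, and must leave and re-enter the vicinity of any vertex $v$ across such a recharge, an interleaving argument gives $L^{*}\ge 2\ell(\{\mu,v\})$ for every $v$; in particular every vertex lies within distance $L^{*}/2$ of $\mu$.

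The core step is to convert $\pi$ into a feasible RMCCP solution of small total length. I would cut $\pi$ at its visits to $\mu$: the internal pieces are already cycles rooted at $\mu$ of length at most $D$, because consecutive visits to $\mu$ in $W^{*}$ are at most $D$ apart, while the initial and final fragments are rooted by appending the edges $\{\mu,a\}$ and $\{b,\mu\}$ to the two window endpoints $a,b$, re-splitting any rooted fragment whose length now exceeds $D$. Shortcutting repeated vertices only shortens cycles by the metric property, exactly as in Lemma~\ref{lem:yann}, and leaves cycles rooted at $\mu$, each of length at most $D$, covering $V$, whose total length is at most $\ell(\pi)+\ell(\{\mu,a\})+\ell(\{b,\mu\})$ plus the length added when re-splitting to respect the bound $D$; bounding the covering part by $L^{*}$ and the rooting/re-splitting by $2L^{*}$ gives a feasible cover of total length at most $3L^{*}$. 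Running the $\alpha$-approximation of Line~\ref{algln:rmccp_one_robot} then yields cycles whose total length $P$ is within a factor $\alpha$ of this, i.e.\ $P\le 3\alpha\,L^{*}$, which is the claimed bound.

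The main obstacle I anticipate is the last link. RMCCP, as defined, optimizes the \emph{number} of cycles, whereas the latency of $S$ is governed by their \emph{total length}; so I must argue that the length bound on the feasible cover extracted from $\pi$ is actually preserved by the approximation. I would handle this either by invoking a bounded-length (or total-length) guarantee of the underlying distance-constrained routing approximation used for RMCCP, or by a two-regime argument separating long cycles (where $P\le mD$ can be matched against an appropriate lower bound on $L^{*}$) from short cycles (where the actual-length estimate above suffices). Pinning the constant to exactly $3$ is the delicate bookkeeping: it depends on how cleanly the two boundary fragments are rooted at $\mu$ and re-split so that every resulting cycle obeys the length-$D$ constraint while the extra length charged to $L^{*}$ stays below $2L^{*}$; the sharper bound $\ell(\{\mu,v\})\le L^{*}/2$ suggests the rooting alone costs only $L^{*}$, so the remaining slack is exactly what the re-splitting and the count-to-length conversion consume, and that is where I would concentrate the careful estimates.
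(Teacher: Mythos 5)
There is a genuine gap, and it is exactly the one you flag yourself as ``the main obstacle'': your final step asserts that running the $\alpha$-approximation for RMCCP ``yields cycles whose total length $P$ is within a factor $\alpha$'' of the total length of the feasible cover you extract from the optimal walk. RMCCP's approximation guarantee is on the \emph{number} of rooted cycles relative to the minimum number $k^*$, not on total length relative to any feasible cover. An $\alpha$-approximate cycle cover can have total length far exceeding $\alpha$ times the length of your extracted cover (e.g., it may use few cycles but each of length close to $D$, while a length-efficient cover uses many very short cycles), so the inequality $P\le 3\alpha L^{*}$ does not follow from anything you have established. Your two proposed repairs do not close this: invoking a total-length guarantee of the underlying routing algorithm assumes a property the lemma does not grant (the lemma hypothesizes only an $\alpha$-approximation to RMCCP as defined), and the ``two-regime'' sketch is left entirely uncarried-out. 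The careful window-cutting and re-splitting construction, which is where you concentrate your effort, is therefore spent on a lower bound ($L^{*}\ge$ length of a short feasible cover) that the approximation guarantee cannot be charged against.

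The paper's proof avoids lengths-versus-lengths comparisons altogether and works purely with cycle \emph{counts}, which is what makes the count-based guarantee usable. It first observes that an optimal single-robot solution with uniform weights is a concatenation of cycles rooted at $\mu$, each of length at most $D$, and that one may assume at most one of these $k$ cycles has length below $D/2$ (otherwise two short cycles merge, by the metric property, into one cycle still respecting the recharging constraint); this yields the structural lower bound $\overline{\texttt{OPT}}_1\ge (k-1)D/2$. Since the optimum's cycles form a feasible rooted cycle cover, $k^*\le k$, and the algorithm's output has latency at most $D\alpha k^*\le D\alpha k$. Combining with $D\le \overline{\texttt{OPT}}_1$ (the opposite case being handled by a separate TSP argument giving $1.5\alpha$) gives
\begin{equation*}
D\alpha k^* \;\le\; D\alpha(k-1)+\alpha D \;\le\; 2\alpha\,\overline{\texttt{OPT}}_1+\alpha\,\overline{\texttt{OPT}}_1 \;=\; 3\alpha\,\overline{\texttt{OPT}}_1 .
\end{equation*}
These three ingredients --- the merging argument bounding the optimum from below by $(k-1)D/2$, the inequality $k^*\le k$, and the case split on $D\lessgtr\overline{\texttt{OPT}}_1$ --- are precisely what your proposal is missing, and they are what convert a count guarantee into a latency guarantee.
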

\begin{proof}
The optimal solution to Problem~\ref{pbm:weighted_latency} with $R=1$ and uniform vertex weights consists of cycles rooted at $\mu$, each cycle of length at most $D$. Let the total number of cycles in the optimal solution be $k$. Also, let the total length of these $k$ cycles be $\overline{\texttt{OPT}}_1$ which is equal to the maximum weighted latency. Note that there exists an optimal solution where at most one of the cycles has length less than $D/2$ (otherwise using the metric property, we can merge two cycles of length less than $D/2$ to get one cycle satisfying the recharging constraint and having total length at most $\overline{\texttt{OPT}}_1$). Hence, the total length of the optimal solution is
\begin{equation}
\label{eq:optimalonerobot}
    \overline{\texttt{OPT}}_1 \geq (k-1)D/2.
\end{equation} 

Let the number of cycles returned by the $\alpha$ approximation algorithm for RMCCP be $\alpha k^*$, where $k^*$ is the minimum number of rooted cycles that can cover the graph. Since each cycle has length at most $D$, the total length or the maximum weighted latency of this solution is at most $ D\alpha k^*$. Since $k^*\leq k$,
\begin{align*}
    D\alpha k^*\leq D\alpha k &\leq D\alpha k - D\alpha + \alpha \overline{\texttt{OPT}}_1\\
    & = D\alpha (k-1) + \alpha \overline{\texttt{OPT}}_1\\
    &\leq 2\alpha \overline{\texttt{OPT}}_1 +\alpha \overline{\texttt{OPT}}_1,
\end{align*}
where the second inequality is true because $D\leq \overline{\texttt{OPT}}_1$ (otherwise the TSP solution is optimal and $\alpha$ approximation of RMCCP will return $\alpha$ cycles each of length at most $1.5$ times the length of TSP, resulting in $1.5\alpha$ approximation) and the last inequality follows from~\eqref{eq:optimalonerobot}.
\end{proof}

\begin{proposition}
Algorithm~\ref{alg:approx_min_max} is a $O(\min\{\log n,\frac{\log{D}}{\log{\log{D}}}\}\log{\rho})$ approximation algorithm for the Problem~\ref{pbm:weighted_latency} with $R=1$.
\end{proposition}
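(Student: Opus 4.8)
The plan is to show that the single walk $S$ produced by Algorithm~\ref{alg:approx_min_max} satisfies $\phi(v)L(S,v)=O(\alpha\lceil\log\rho\rceil)\,\texttt{OPT}$ for every $v$, where $\texttt{OPT}$ is the optimal value of Problem~\ref{pbm:weighted_latency} with $R=1$ and $\alpha=O(\min\{\log n,\frac{\log D}{\log\log D}\})$ is the RMCCP approximation factor; substituting this $\alpha$ then gives the stated ratio. I index the $\lceil\log\rho\rceil$ weight classes so that class $i$ collects the vertices whose (normalized) weight is $\Theta(2^{-i})$, write $\phi_i$ for the smallest weight in class $i$, and set $M_i:=\texttt{OPT}/\phi_i=\Theta(2^i\,\texttt{OPT})$.

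First I would establish a per-class lower bound linking $\texttt{OPT}$ to the length of the RMCCP walk computed in Line~\ref{algln:rmccp_one_robot}. In the optimal walk each $v\in V_i$ satisfies $\phi(v)L^*(v)\leq\texttt{OPT}$, hence $L^*(v)\leq M_i$. Shortcutting the optimal walk onto $V_i$ — which, by the metric property, preserves the recharging constraint and can only decrease latencies, exactly as in the cycle-extraction argument of Lemma~\ref{lem:yann} — yields a feasible single-robot, uniform-weight solution on $V_i$ of maximum latency at most $M_i$. Therefore $\overline{\texttt{OPT}}_1(V_i)\leq M_i$, and Lemma~\ref{lem:one_robot_RMCCP} gives that the RMCCP walk for $V_i$ has total length $\ell_i\leq 3\alpha\,\overline{\texttt{OPT}}_1(V_i)\leq 3\alpha M_i$.

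Next I would analyze the latency that the interleaving in Line~\ref{algln:walk_break} actually delivers. The walk $S$ is a cyclic concatenation of $t=2^{\lceil\log\rho\rceil+1}$ super-segments $S_1,\dots,S_t$, and class $i$ contributes to each $S_k$ exactly one of its $2^i$ balanced sub-walks, selected by $j_i=k\bmod 2^i$; consequently any fixed sub-walk, and hence any $v\in V_i$ it contains, recurs once every $2^i$ super-segments. The latency $L(S,v)$ is thus at most $2^i$ times the maximum super-segment length, and since each $S_k$ carries one piece per class, with a class-$i'$ piece of length $O(\ell_{i'}/2^{i'})$ (the partition is balanced, and a window of $2^i$ super-segments meets classes $i'>i$ only partially), the super-segment length is $O\!\left(\sum_{i'}\ell_{i'}/2^{i'}\right)$; the $\mu$ stitched between pieces adds nothing asymptotically because each piece already begins and ends at $\mu$, which also certifies $L(S,\mu)\leq D$ since every piece is a concatenation of rooted cycles of length at most $D$.

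Finally I would combine the two estimates and telescope. For $v\in V_i$ this gives $\phi(v)L(S,v)\leq\phi_i\cdot 2^i\cdot O\!\left(\sum_{i'}\ell_{i'}/2^{i'}\right)$; the weight–period cancellation $\phi_i 2^i=\Theta(1)$ removes the geometric blow-up, while $\ell_{i'}/2^{i'}\leq 3\alpha M_{i'}/2^{i'}=\Theta(\alpha\,\texttt{OPT})$ makes each class contribute $O(\alpha\,\texttt{OPT})$, so summing over the $\lceil\log\rho\rceil$ classes yields $\phi(v)L(S,v)=O(\alpha\lceil\log\rho\rceil)\,\texttt{OPT}$. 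I expect the main obstacle to be the middle step: rigorously bounding the inter-visit time of the interleaved schedule, i.e. showing the partition into $2^i$ pieces can be made balanced enough that each piece has length $O(\ell_i/2^i)$, correctly accounting for the partial appearance of the higher-frequency classes within a $2^i$-super-segment window, and verifying that the key identity $\phi_i 2^i=\Theta(1)$ — which converts the unavoidable $2^i$ factor in latency into a bounded \emph{weighted} latency — holds with uniform constants across all classes; the shortcutting step yielding $\overline{\texttt{OPT}}_1(V_i)\leq M_i$ also needs care so that recharging feasibility survives the projection onto $V_i$.
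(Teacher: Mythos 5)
Your proposal is correct and takes essentially the same route as the paper's proof: the same weight-class decomposition, the same per-class length bound obtained by shortcutting the optimal walk onto $V_i$ and invoking Lemma~\ref{lem:one_robot_RMCCP}, the same partition into $2^i$ pieces, and the same binary-interleaving analysis, which you prove directly via the $\phi_i 2^i=\Theta(1)$ cancellation where the paper instead cites Lemma 5.1 of~\cite{Alamdari2014}. The balanced-partition granularity issue you flag as the main obstacle is likewise asserted without proof in the paper (it simply states that each piece $W_{i,j}$ has length at most $\alpha\texttt{OPT}_{G'}$), so your attempt matches the paper's argument in both substance and level of rigor.
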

\begin{proof}
First we show that the walk returned by Algorithm~\ref{alg:approx_min_max} is feasible. Since RMCCP finds a set of rooted tours for the partition $V_i$ such that no tour has length more than $D$, the recharging constraint is satisfied for each partition $V_i$. The walk $S_k$ is constructed by connecting the walks $W_{i,j}$ via the recharging vertex, hence the feasibility of $S_k$ and therefore $S$ holds (as each $S_k$ starts from the recharging vertex).\\
Let the relaxed vertex weights be given be $\bar{\phi}_i$ and let the vertices with weight $\frac{1}{2^i}$ be denoted by $V_i$. The algorithm constructs a binary walk $[S_1,S_2,\ldots,S_t]$ where $t=\log{\frac{\max_i{\bar{\phi}_i}}{\min_i{\bar{\phi}_i}}}$, and each walk $S_k$ starts at the recharging depot. 

In a partition $V_i$, the algorithm breaks down the walk returned by the approximation algorithm of RMCCP into $2^i$ subwalks $\{W_{i,0},\ldots,{W_{i,2^i-1}}\}$. The vertices of $V_i$ will have a maximum latency $2^i\texttt{OPT}_{G'}$ where $\texttt{OPT}_{G'}$ is the cost of the optimal solution on $G'$. Therefore, an optimal solution to RMCCP on $V_i$ will have a total length at most $2^i\texttt{OPT}_{G'}$ (otherwise, the optimal solution to min-max weighted latency problem has cost more than $\texttt{OPT}_{G'}$). As the solution to RMCCP is partitioned into $2^i$ walks in Line~\ref{algln:walk_break}, length of the walk $W_{i,j}$ is at most $\alpha\texttt{OPT}_{G'}$ where $\alpha$ is the approximation ratio of RMCCP. 

The walk $S_k$ is given by $[W_{0,j_0},\nu,W_{1,j_1},\nu,\ldots,\nu,W_{\log{t},j_{\log{t}}}]$ where $k=j_i (\mod 2^i)$ for $0\leq i < \log{t}$. Since, there are $\log{\rho_{G'}}$ walks in $S_k$, and those walks are connected using the recharging vertex, and $\ell({\nu,v})<\texttt{OPT}_{G'}/2$ for all $v\in V$, the length of the walk $S_k$ is at most $O(\alpha\log{\rho_{G'}})\texttt{OPT}_{G'}$.\\
The cost of the walk $S$ in $G'$ is at most $\texttt{OPT}_{G'}+2\max_k{S_k}$ (see Lemma 5.1 in~\cite{Alamdari2014}). Also $\texttt{OPT}_{G'}\leq\texttt{OPT}_{G}\leq2\texttt{OPT}_{G'}$ (see Lemma 3.2 in~\cite{Alamdari2014}). Using the approximation ratio of RMCCP, the maximum weighted latency of the walk returned by algorithm~\ref{alg:approx_min_max} is $O(\min\{\log n, \frac{\log{D}}{\log{\log{D}}}\}\log{\rho})\texttt{OPT}_G$.
\end{proof}

\begin{algorithm}[ht]
\DontPrintSemicolon
\caption{\textsc{LatencyWalks}}
\label{multiple_latency}
	\KwIn{ Graph $G=(\{V\cup\mu\},E)$, vertex weights $\phi(v), \forall v\in V$, discharging time $D$ and number of robots $R$}
	\KwOut{ A set of $R$ walks $\{W_1,\ldots,W_R\}$ in $G$}
  \vspace{0.2em}
  \hrule
	 $\rho = \max_{i,j} \phi_i/\phi_j$\;
	\If {$\max_{i,j} \phi_i/\phi_j$ is a power of $2$} {$\rho = \max_{i,j} \phi_i/\phi_j+1$ }
 
	 Let $V_{i}$ be the set of vertices of weight $\frac{1}{2^i} < \phi(u) \leq \frac{1}{2^{i-1}} $  for $1\leq i \leq \lceil \log_2 \rho \rceil $ \;
	\If {$R< \log\rho$}{
	\For {$j=1$ to $R$}{
	 Let $G_j$ be a subgraph of $G$ with vertices $V_i$ for  $\lceil\frac{j-1}{R}\log\rho\rceil +1 \leq  i\leq \lceil\frac{j}{R}\log\rho\rceil$ \; \label{algln:Gj}
	 $W_j = $ \textsc{MinMaxLatencyOneRobot}$(G_j)$ \;
}
}
	\If {$R \geq \log\rho$}{
	 Equally space $\lfloor R/\lceil\log\rho\rceil \rfloor$ robots on RMCCP solution of $V_i$ for all $i$ to get  $\{W_1,\ldots,W_{ \lceil\log\rho\rceil \lfloor \frac{R}{\lceil\log\rho\rceil }\rfloor}\}$ \;
	\For {$k=R-\lceil\log\rho\rceil \lfloor \frac{R}{\lceil\log\rho\rceil }\rfloor+1$ to $R$} {
	 Find subset $V_i$ that has the maximum cost with currently assigned robots \;
	 Equally space all the robots on $V_i$ along with robot $k$ to get $W_k$ \;
	}
	}
\end{algorithm}

Algorithm~\ref{alg:approx_min_max} is used as a subroutine in Algorithm~\ref{multiple_latency} to find walks for $R$ robots. The following result characterizes the cost of the solution returned by Algorithm~\ref{multiple_latency}.

\begin{proposition}
Given an instance of Problem~\ref{pbm:weighted_latency}, Algorithm~\ref{multiple_latency} constructs $R$ feasible walks such that the maximum weighted latency of the graph is $O(\min\{\log n,\frac{\log D}{\log \log D}\}\frac{\log \rho}{R})\texttt{OPT}_1$ where $\texttt{OPT}_1$ is the maximum weighted latency of the single optimal walk.
\end{proposition}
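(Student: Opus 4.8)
The plan is to analyze the two regimes of Algorithm~\ref{multiple_latency} separately---$R<\log\rho$ and $R\ge\log\rho$---show each attains the claimed bound, and observe that the algorithm simply executes the branch matching the input. Throughout I would write $\alpha=O(\min\{\log n,\tfrac{\log D}{\log\log D}\})$ for the approximation ratio of RMCCP, and recall from the single-robot analysis (the guarantee proved for Algorithm~\ref{alg:approx_min_max}) that a single robot restricted to a subgraph $H$ with weight ratio $\rho_H$ attains maximum weighted latency $O(\alpha\log\rho_H)\,\texttt{OPT}_H$, where $\texttt{OPT}_H$ is the optimal single-walk cost on $H$.

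For the regime $R<\log\rho$, robot $j$ is assigned the band $G_j$ of consecutive weight levels $V_i$ with $\lceil\frac{j-1}{R}\log\rho\rceil+1\le i\le\lceil\frac{j}{R}\log\rho\rceil$ and runs \textsc{MinMaxLatencyOneRobot}, so each walk is feasible by that subroutine's guarantee. First I would bound the weight ratio inside a band: since $G_j$ spans at most $\lceil\tfrac{1}{R}\log\rho\rceil$ levels and levels differ by factors of two, $\log\rho_{G_j}=O(\log\rho/R)$. Next I would note that $G_j$ is a vertex-subset of $G$, so shortcutting the optimal single walk of $G$ (using the metric property, which preserves both latencies and the $\mu$-return bound) yields a single walk on $G_j$ of no greater cost, hence $\texttt{OPT}_{G_j}\le\texttt{OPT}_1$. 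Combining with the single-robot guarantee, robot $j$'s cost is $O(\alpha\log\rho_{G_j})\,\texttt{OPT}_{G_j}=O(\alpha\tfrac{\log\rho}{R})\,\texttt{OPT}_1$; since the bands partition $V$, the overall cost is the maximum over $j$ of these, which is the claimed bound.

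For the regime $R\ge\log\rho$, set $m=\lfloor R/\lceil\log\rho\rceil\rfloor\ge1$. Phase one places $m$ robots on the RMCCP solution of each level $V_i$. I would combine the rooted cycles returned for $V_i$ into a single closed walk through $\mu$ and equally space the $m$ robots on it (Section~\ref{sec:factor_R}), reducing the unweighted latency of each $v\in V_i$ to $\Lambda_i/m$, where $\Lambda_i$ is that walk's length; feasibility holds because consecutive visits to $\mu$ along the walk are at most one cycle length $\le D$ apart. The crux is bounding $\Lambda_i$: mimicking the single-robot proof, the optimal walk of cost $\texttt{OPT}_1$ gives each $v\in V_i$ unweighted latency $<2^i\texttt{OPT}_1$, so breaking that walk at $\mu$ as in Lemma~\ref{lem:yann} shows the optimal RMCCP length on $V_i$ is $O(2^i\texttt{OPT}_1)$; by the total-length approximation established inside the proof of Lemma~\ref{lem:one_robot_RMCCP}, the returned $\Lambda_i=O(\alpha 2^i\texttt{OPT}_1)$. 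Since $\phi(v)\le 2^{-(i-1)}$ on $V_i$, the weighted latency is $\phi(v)\Lambda_i/m=O(\alpha\texttt{OPT}_1/m)$, and $m=\Theta(R/\log\rho)$ yields $O(\alpha\tfrac{\log\rho}{R})\,\texttt{OPT}_1$. Finally, phase two assigns the fewer than $\lceil\log\rho\rceil$ leftover robots to the currently most expensive level; this adds robots to an existing spacing and therefore only decreases the maximum weighted latency, so the bound is preserved.

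The main obstacle I anticipate is the $\Lambda_i$ bound in the second regime: it requires converting the RMCCP guarantee---which is stated for the \emph{number} of cycles---into a guarantee on total tour length, which I would obtain by reusing the inequality $D\alpha k^*\le 3\alpha\,\overline{\texttt{OPT}}_1$ from the proof of Lemma~\ref{lem:one_robot_RMCCP}, together with the observation that the optimal RMCCP length on a level is controlled by $2^i\texttt{OPT}_1$. A secondary point needing care is verifying recharging feasibility when robots are spaced on a concatenation of cycles rather than a single cycle, and confirming $\texttt{OPT}_{G_j}\le\texttt{OPT}_1$ via metric shortcutting in the first regime.
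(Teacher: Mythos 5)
Your proposal is correct and follows essentially the same route as the paper's proof: the same two-regime split, with the single-robot guarantee plus the weight-ratio bound $\log\rho_{G_j}=O(\log\rho/R)$ and $\texttt{OPT}_1^j\le\texttt{OPT}_1$ in the case $R<\log\rho$, and Lemma~\ref{lem:one_robot_RMCCP} combined with equal spacing of $\lfloor R/\lceil\log\rho\rceil\rfloor$ robots on the per-level RMCCP solutions in the case $R\ge\log\rho$. Your write-up merely makes explicit several steps the paper leaves implicit (the metric-shortcutting argument for $\texttt{OPT}_{G_j}\le\texttt{OPT}_1$, the total-length bound $\Lambda_i=O(\alpha 2^i\texttt{OPT}_1)$ extracted from the proof of Lemma~\ref{lem:one_robot_RMCCP}, recharging feasibility of the concatenated rooted cycles, and the harmlessness of the leftover robots in phase two), all of which are consistent with the paper's reasoning.
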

\begin{proof}
We first consider the case when $R<\log{\rho}$. The maximum vertex weight in the subgraph $G_j$ constructed at Line~\ref{algln:Gj} of the algorithm will be at most $1/(2^{\frac{j-1}{R}\log\rho})$, whereas the minimum vertex weight in $G_j$ will be at least $1/(2^{\frac{j}{R}\log\rho})$. Hence the ratio of the maximum to minimum vertex weights in $G_j$ will be at most $\rho_j = 2^{\frac{\log\rho}{R}}$. Therefore, the approximation algorithm for one robot will return a walk $W_j$ such that the maximum weighted latency of $W_j$ will be $O(\min\{\log n, \frac{\log D}{\log\log D}\}\log\rho_j)\texttt{OPT}_1^j$. Moreover, $\texttt{OPT}_1^j \leq \texttt{OPT}_1$ and hence if $R<\log \rho$, the maximum weighted latency will be at most $O(\min\{ \log n, \frac{\log D}{\log\log D}\}\frac{\log\rho}{R})\texttt{OPT}_1$.

Now, we consider the case when $R\geq \log{\rho}$. From Lemma~\ref{lem:one_robot_RMCCP}, an $\alpha$ approximation algorithm to RMCCP is a $3\alpha$ approximation algorithm for the vertices in $V_i$ when all the vertex weights in $V_i$ are equal. Since the vertex weights within $V_i$ differ by a factor of $2$ at most, the maximum weighted latency of the solution returned by the approximation algorithm for RMCCP will be $O(\min\{\log n, \frac{\log D}{\log\log D}\})\texttt{OPT}_1$. Dividing these cycles between $\lfloor R/\lceil\log\rho\rceil \rfloor$ robots will decrease the latency by a factor of $O(\lfloor R/\lceil\log\rho\rceil \rfloor)$.
\end{proof}

Note that Algorithm~\ref{multiple_latency} bounds the cost of the solution by a function of the optimal cost of a single robot. This algorithm shows that $R$ robots can asymptotically decrease the weighted latency given by a single walk by a factor of $R$, which is not straightforward for this problem as discussed in Section~\ref{sec:factor_R}. A relation between $\texttt{OPT}_1$ and the optimal weighted latency could result in an approximation ratio for Algorithm~\ref{multiple_latency}, however, we were not able to establish such a relation.

In~\cite{Alamdari2014}, an approximation algorithm for the single robot version of Problem~\ref{pbm:weighted_latency} without recharging constraints is given. This algorithm returns a walk in graph $G_j$ such that the maximum weighted latency of that walk is not more than $(8\log\rho_j + 10)\texttt{OPT}_1^j$, where $\rho_j$ is the ratio of maximum to minimum vertex weights in $G_j$ and $\texttt{OPT}_1^j$ is the optimal maximum weighted latency for one robot in $G_j$. Using this approximation algorithm as a subroutine in Algorithm~\ref{multiple_latency} when no recharging constraints are given results in better approximation ratio given below. 

\begin{proposition}
Given an instance of Problem~\ref{pbm:weighted_latency} with infinite discharge time, Algorithm~\ref{multiple_latency} constructs $R$ walks such that the maximum weighted latency of the graph is not more than $(\frac{8\log\rho}{R}+10)\texttt{OPT}_1$ if $R\leq \log\rho$ and $3\texttt{OPT}_1/\lfloor R/\lceil\log\rho\rceil \rfloor$ otherwise, where $\rho=\max \frac{\phi (v_i)}{\phi (v_j)}\ $ and $\texttt{OPT}_1$ is the maximum weighted latency of the single optimal walk.
\end{proposition}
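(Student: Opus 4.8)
The plan is to follow the same two-case analysis as the preceding proposition, changing only the single-robot subroutine that Algorithm~\ref{multiple_latency} invokes: since the discharge time is infinite there is no recharging constraint, so in the regime $R\le\log\rho$ I would replace \textsc{MinMaxLatencyOneRobot} by the single-robot approximation of~\cite{Alamdari2014}, which returns a walk on a subgraph $G_j$ of maximum weighted latency at most $(8\log\rho_j+10)\texttt{OPT}_1^j$. The only real change in the argument is to track the explicit constants coming from that subroutine rather than absorbing them into $O(\cdot)$, and to verify that the two subroutines (Alamdari for few robots, per-level cycles for many robots) still produce feasible walks when feasibility merely means covering all vertices.

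For the case $R\le\log\rho$, I would use that the algorithm splits the $\lceil\log\rho\rceil$ weight levels into $R$ contiguous groups, so each subgraph $G_j$ spans about $\frac{\log\rho}{R}$ levels; since consecutive levels differ in weight by a factor of $2$, the max-to-min weight ratio in $G_j$ is $\rho_j\le 2^{\log\rho/R}$, i.e. $\log\rho_j\le\frac{\log\rho}{R}$. Feeding $G_j$ to the subroutine of~\cite{Alamdari2014} then gives a walk $W_j$ with $\max_{v\in G_j}C(W_j,v)\le(\frac{8\log\rho}{R}+10)\texttt{OPT}_1^j$. The key monotonicity step is $\texttt{OPT}_1^j\le\texttt{OPT}_1$: the single optimal walk on the whole graph already attains weighted latency at most $\texttt{OPT}_1$ on every vertex of $G_j$, so its restriction witnesses $\texttt{OPT}_1^j\le\texttt{OPT}_1$. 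Because the groups partition the vertex set, taking the maximum over $j$ yields the claimed bound $(\frac{8\log\rho}{R}+10)\texttt{OPT}_1$.

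For the case $R\ge\log\rho$, I would argue per level $V_i$ as in Lemma~\ref{lem:one_robot_RMCCP}, but exploit that infinite discharge removes the tour-length constraint, so the relevant single-robot structure on a level is a single cycle (a TSP tour) rather than a length-$D$ cycle cover. Treating all vertices of $V_i$ at the largest weight in the level costs a factor $2$ (weights inside a level differ by at most $2$), and replacing the optimal tour by a metric $1.5$-approximation costs a further factor $1.5$; combined with the lower bound $\texttt{OPT}_1\ge \frac{1}{2^i}\,\ell_{\mathrm{TSP}}(V_i)$ (any walk covering $V_i$ has geometric latency at least the TSP length of $V_i$), these multiply to a single-robot weighted latency of at most $3\,\texttt{OPT}_1$ on $V_i$. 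Since the per-level solution is a single cycle, I may invoke the cycle exception of Section~\ref{sec:factor_R}: equally spacing $\lfloor R/\lceil\log\rho\rceil\rfloor$ robots on that cycle divides its latency by exactly that many, giving $3\texttt{OPT}_1/\lfloor R/\lceil\log\rho\rceil\rfloor$, and the leftover robots assigned to the worst level only decrease the cost.

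The hard part will be pinning down the constant $3$ in the second case, namely verifying that the factor-$2$ loss from the within-level weight spread and the factor-$1.5$ loss from the metric-TSP approximation multiply exactly against the correct lower bound for $\texttt{OPT}_1$, and confirming that equally spaced robots on the single per-level cycle genuinely reduce the weighted latency by the full factor $\lfloor R/\lceil\log\rho\rceil\rfloor$ (which holds for cycles but fails for general walks, as noted in Section~\ref{sec:factor_R}). The first case is comparatively routine once the monotonicity $\texttt{OPT}_1^j\le\texttt{OPT}_1$ and the level-counting bound on $\rho_j$ are established.
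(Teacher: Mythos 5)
Your proposal is correct and takes essentially the same route as the paper: the paper gives no separate proof for this proposition, stating only that the bound follows by substituting the single-robot algorithm of~\cite{Alamdari2014} into Algorithm~\ref{multiple_latency}, which is precisely your two-case reconstruction — $\log\rho_j\le\frac{\log\rho}{R}$ together with $\texttt{OPT}_1^j\le\texttt{OPT}_1$ for $R\le\log\rho$, and the per-level factor $2\times 1.5=3$ with exact latency division by equally spacing robots on a cycle for $R\ge\log\rho$. The only step worth making explicit is that your parenthetical lower bound, $\max_{v\in V_i}L(W,v)\ge \ell_{\mathrm{TSP}}(V_i)$ for any single walk covering $V_i$, is itself a nontrivial known lemma (for uniform weights the single-robot optimum equals the TSP length,~\cite{Alamdari2014}) rather than an immediate observation — the same fact the paper invokes parenthetically inside the proof of Lemma~\ref{lem:one_robot_RMCCP}.
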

 
Now we show that if there is an approximation algorithm for Problem~\ref{pbm:weighted_latency}, it can be used to solve Problem~\ref{pbm:min_robs} using the optimal number of robots but with the latency constraints relaxed by a factor $\alpha$. This is referred to as a $(\alpha,1)$-bi-criterion algorithm~\cite{iyer2013submodular} for Problem~\ref{pbm:min_robs}.
\begin{proposition}\label{prop:bicriterion}
If there exists an $\alpha$-approximation algorithm for Problem~\ref{pbm:weighted_latency}, then there exists a $(\alpha,1)$-bi-criterion approximation algorithm for Problem~\ref{pbm:min_robs}.
\end{proposition}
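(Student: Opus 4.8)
The plan is to reduce Problem~\ref{pbm:min_robs} to a sequence of instances of Problem~\ref{pbm:weighted_latency} by choosing the vertex weights to encode the latency constraints. Specifically, for each vertex $v$ I would set $\phi(v) = 1/r(v)$ (assigning weight zero to any vertex with $r(v)=\infty$). With this choice the weighted latency of $v$ under a set of walks $\W$ is $C(\W,v) = L(\W,v)/r(v)$, so the latency constraint $L(\W,v)\le r(v)$ of Problem~\ref{pbm:min_robs} is exactly the statement $C(\W,v)\le 1$. Consequently $\max_v C(\W,v)\le 1$ holds for some set of $R$ feasible walks if and only if those same walks form a feasible solution of Problem~\ref{pbm:min_robs} with $R$ robots. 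In particular, writing $\texttt{OPT}(R)$ for the optimal value of Problem~\ref{pbm:weighted_latency} on $R$ robots with these weights, Problem~\ref{pbm:min_robs} is feasible with $R$ robots precisely when $\texttt{OPT}(R)\le 1$, and hence $\Ropt$ is the smallest such $R$.

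Since $\Ropt$ is not known in advance, the algorithm I would use increments the robot count $R=1,2,3,\dots$, and for each $R$ runs the given $\alpha$-approximation for Problem~\ref{pbm:weighted_latency} with the weights above, halting at the first $R$ for which the returned walks $\W$ satisfy $L(\W,v)\le \alpha\, r(v)$ for every $v$ (equivalently $\max_v C(\W,v)\le \alpha$). This stopping test is directly checkable from the returned walks, and by Assumption~\ref{assumptions} a feasible solution exists so the loop is guaranteed to terminate. The recharging requirement is inherited for free: every solution produced by the Problem~\ref{pbm:weighted_latency} subroutine consists of feasible walks, so $L(W_k,\mu)\le D$ for each robot, which is exactly condition~(2) of Problem~\ref{pbm:min_robs}.

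It remains to argue that this procedure halts using at most $\Ropt$ robots and that its output meets the bi-criterion guarantee. For the first point, consider the subroutine evaluated at $R=\Ropt$: because Problem~\ref{pbm:min_robs} is feasible with $\Ropt$ robots we have $\texttt{OPT}(\Ropt)\le 1$, so the $\alpha$-approximation returns walks of cost at most $\alpha\,\texttt{OPT}(\Ropt)\le \alpha$. Thus the stopping condition is met no later than $R=\Ropt$, so the loop terminates with some $R\le\Ropt$, the optimal number of robots, i.e.\ the factor-$1$ criterion. For the second point, at termination the walks satisfy $\max_v C(\W,v)\le\alpha$ by construction, which unwinds to $L(\W,v)\le\alpha\,r(v)$ for all $v$; together with the recharging constraint this is precisely an $(\alpha,1)$-bi-criterion solution.

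The step I expect to need the most care is the equivalence tying feasibility of the constraint-based Problem~\ref{pbm:min_robs} to a fixed threshold value of the optimization-based Problem~\ref{pbm:weighted_latency}. One must verify that imposing the normalization convention $\max_v\phi(v)=1$ only rescales that threshold (from $1$ to $r_{\texttt{min}}$) and leaves the $\alpha$-approximation guarantee intact; stating the stopping test directly as $L(\W,v)\le\alpha\,r(v)$ sidesteps this, since that latency condition is scale-invariant. One must also handle the edge case of vertices with $r(v)=\infty$, which contribute weight zero and therefore never govern the maximum weighted latency. Everything else is a direct translation between the two objectives.
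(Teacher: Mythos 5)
Your proposal is correct and takes essentially the same approach as the paper: the paper sets $\phi(v)=r_{\texttt{min}}/r(v)$ (so that $\max_v\phi(v)=1$), proves your inline feasibility equivalence as a separate lemma (Problem~\ref{pbm:min_robs} is feasible with $R$ robots iff the optimal weighted latency is at most $r_{\texttt{min}}$), and then searches over $R$, accepting the first $R$ whose returned walks satisfy $L(\W,v)\leq \alpha r(v)$ for all $v$. The only differences are immaterial: your weights differ from the paper's by the scale factor $r_{\texttt{min}}$ (a point you already address), and you increment $R$ linearly where the paper uses binary search.
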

We will need the following lemma relating the two problems to prove the proposition. Given an instance of the decision version of Problem~\ref{pbm:min_robs} with $R$ robots, let us define an instance of Problem~\ref{pbm:weighted_latency} by assigning $\phi(v)= \frac{r_{\texttt{min}}}{r(v)}, \forall v\in V$, where $r_{\texttt{min}}=\min_v r(v)$.
\begin{lemma}
\label{lem:walk}
 An instance of the decision version of Problem~\ref{pbm:min_robs} is feasible if and only if the optimal maximum weighted latency is at most $r_{\texttt{min}}$ for the corresponding instance of Problem~\ref{pbm:weighted_latency}.
\end{lemma}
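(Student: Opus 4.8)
The plan is to observe that this reduction is nothing more than a coordinate-wise rescaling of the latency constraints, so that the feasibility condition of the decision version of Problem~\ref{pbm:min_robs} and the sublevel condition $\max_v C(\W,v)\le r_{\texttt{min}}$ describe \emph{exactly the same} family of admissible walk-tuples. Both problems are posed with the same graph $G$, the same discharge time $D$, and the same number $R$ of robots; crucially, a set of walks is \emph{feasible} for Problem~\ref{pbm:weighted_latency} precisely when $L(W_k,\mu)\le D$ for every $k$, which is identical to the recharging requirement in the decision version of Problem~\ref{pbm:min_robs}. Thus both problems range over the same feasible set of walk-tuples, and the proof reduces to comparing the two thresholds on that common set.

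The key algebraic identity is that for the weights $\phi(v)=r_{\texttt{min}}/r(v)$ and \emph{any} set of walks $\W$,
\[
C(\W,v)=\phi(v)L(\W,v)=\frac{r_{\texttt{min}}}{r(v)}\,L(\W,v),
\]
so that $C(\W,v)\le r_{\texttt{min}}$ holds if and only if $L(\W,v)\le r(v)$. I would record this equivalence once and then read both directions of the lemma off it.

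For the forward direction, I would suppose the decision instance is feasible, witnessed by walks $\W$ with $L(\W,v)\le r(v)$ for all $v$ and $L(W_k,\mu)\le D$ for all $k$. The recharging bounds make $\W$ feasible for Problem~\ref{pbm:weighted_latency}, and the identity gives $C(\W,v)\le r_{\texttt{min}}$ for every $v$, hence $\max_v C(\W,v)\le r_{\texttt{min}}$; since the optimal value is the minimum of this maximum over all feasible walk-tuples, it is at most $r_{\texttt{min}}$. Conversely, if the optimal maximum weighted latency is at most $r_{\texttt{min}}$, I would pick a feasible minimizer $\W$ (so $L(W_k,\mu)\le D$) attaining $\max_v C(\W,v)\le r_{\texttt{min}}$. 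The identity then yields $L(\W,v)\le r(v)$ for each $v$, and together with the recharging bounds this exhibits $\W$ as a certificate that the decision instance is feasible.

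The only genuine care needed is the bookkeeping for vertices with $r(v)=\infty$, for which $\phi(v)=0$ and the corresponding latency constraint is vacuous: these contribute a zero weighted-latency term, are consistent with the normalization $\max_v\phi(v)=1$ attained at a vertex with $r(v)=r_{\texttt{min}}$, and hence neither affect the maximum in Problem~\ref{pbm:weighted_latency} nor impose a constraint in Problem~\ref{pbm:min_robs}. Beyond this edge case the equivalence is immediate, so I do not expect a substantive obstacle; the content of the lemma is precisely that the reduction rescales each latency constraint into an identical weighted-latency threshold rather than genuinely transforming the problem.
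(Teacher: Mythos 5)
Your proposal is correct and takes essentially the same approach as the paper: both rest on the identity $\phi(v)L(\W,v)\le r_{\texttt{min}} \iff L(\W,v)\le r(v)$ applied to an optimal set of walks in each direction (the paper phrases the forward direction as a contrapositive, but the content is identical). Your explicit handling of the $r(v)=\infty$ edge case is a minor extra care the paper omits, not a different route.
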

\begin{proof}
If the optimal set of walks $\W$ has a cost more than $r_{\texttt{min}}$, then $L(\W,v) > r_{\texttt{min}}/\phi(v) =r(v)$ for some vertex $v$. Hence the latency constraint for that vertex is not satisfied and the set of walks $\W$ is not feasible.

If the optimal set of walks $\W$ has a cost at most $r_{\texttt{min}}$, then $ L(\W,v)\phi(v) \leq r_{\texttt{min}}$ for all $v$. Hence, $L(\W,v) \leq r_{\texttt{min}}/\phi(v) = r(v)$. So, the latency constraints are satisfied for all vertices and $\W$ is feasible.
\end{proof}

\begin{proof}[Proof of Proposition~\ref{prop:bicriterion}]
If a problem instance of Problem~\ref{pbm:min_robs} with $R$ robots is feasible, then by Lemma~\ref{lem:walk} the optimal set of walks has a cost at most $r_{\texttt{min}}$. The $\alpha$-approximation algorithm for the corresponding Problem~\ref{pbm:weighted_latency} will return a set of walks $\W$ with a cost no more than $\alpha r_{\texttt{min}}$. Hence, $L(\W,v)\leq \alpha r_{\texttt{min}}/\phi(v) =\alpha r(v)$, for all $v$.

Hence, we can use binary search to find the minimum number of robots for which the $\alpha$-approximation algorithm for the corresponding Problem~\ref{pbm:weighted_latency} results in a latency at most $\alpha r(v)$ for all $v$. This will be the minimum number of robots for which the problem is feasible.
\end{proof}

\section{Simulation Results}
\label{sec:latency-sim}
We now present the empirical performance of the algorithms presented in the previous sections. For the approximation algorithm, we used the orienteering problem as a subroutine to solve the RMCCP problem. The orienteering problems in RMCCP and Algorithm~\ref{alg:greedy} were solved using the ILP formulation from~\cite{letchford2013compact} and the ILP's were solved using the Gurobi solver~\cite{gurobi}.
\subsection{Patrolling an Environment}
The graphs for the problem instances were generated randomly in a real world environment. The scenario represents ground robots monitoring the University of Waterloo campus. Vertices around the campus buildings represent the locations to be monitored and a complete weighted graph was created by generating a probabilistic roadmap to find paths between those vertices. Figure~\ref{fig:environment} shows the patrolling environment. To generate random problem instances of different sizes, $n$ random vertices were chosen from the original graph. The latency constraints were generated uniformly randomly between $\text{TSP}/k$ and $k\text{TSP}$ where $k$ was chosen randomly between $4$ and $8$ for each instance. Here $\text{TSP}$ represents the TSP length of the graph found using the LKH implementation~\cite{helsgaun2000effective}. The robot speed was set as $1$ meters per second and the battery discharge time was set as $D = 3$ hours.

For each graph size, $10$ random instances were created. The proposed approximation algorithm, greedy heuristic algorithm and the orienteering based heuristic algorithm were used to find the walks for each problem instance. The average runtimes of the algorithms are presented in Figure~\ref{fig:runtimes}. As expected, Algorithm~\ref{alg:greedy} is considerably slower than the approximation and simple greedy algorithms due to multiple calls to the ILP solver. However, as shown in Figure~\ref{fig:obj_value}, Algorithm~\ref{alg:greedy} also gives the lowest number of robots required for most of these instances. The trend of the number of robots returned by the greedy algorithm and the orienteering based algorithm shows that the idea of visiting more vertices on the way to the greedily picked vertex works well in practice.  
\begin{figure}
\centering
\includegraphics[width=.7\linewidth, angle=90]{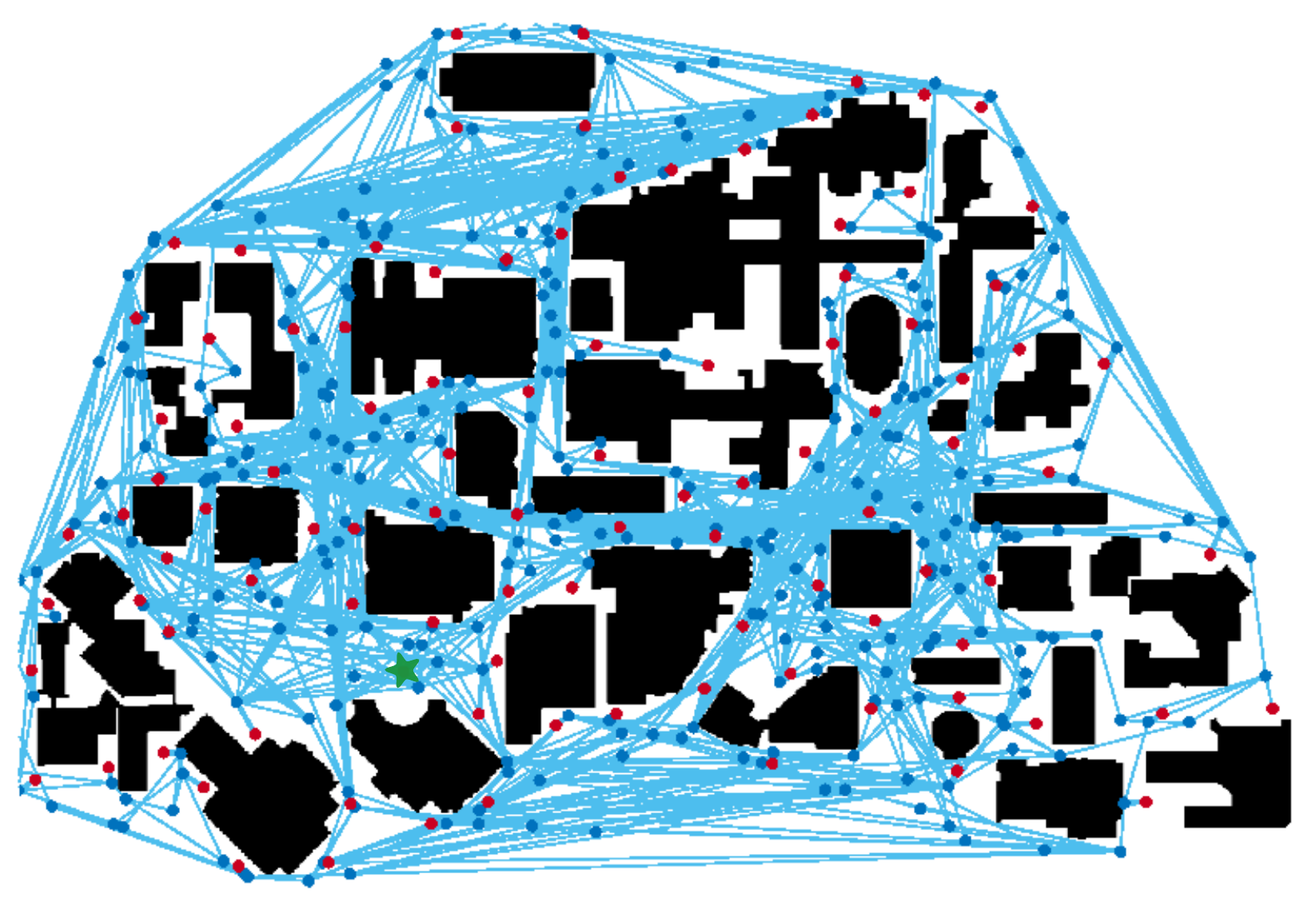}
\caption[Environment used to generate random problem instances]{The environment used for generating random problem instances. The red dots represent the vertices that need to be monitored, while the blue dots indicate the vertices in the Probabilistic Roadmap used to find the shortest paths between the red vertices. The green star represents the charging depot for the monitoring robots.}
\label{fig:environment}
\end{figure}
\begin{figure}
\centering
\includegraphics[width=\linewidth]{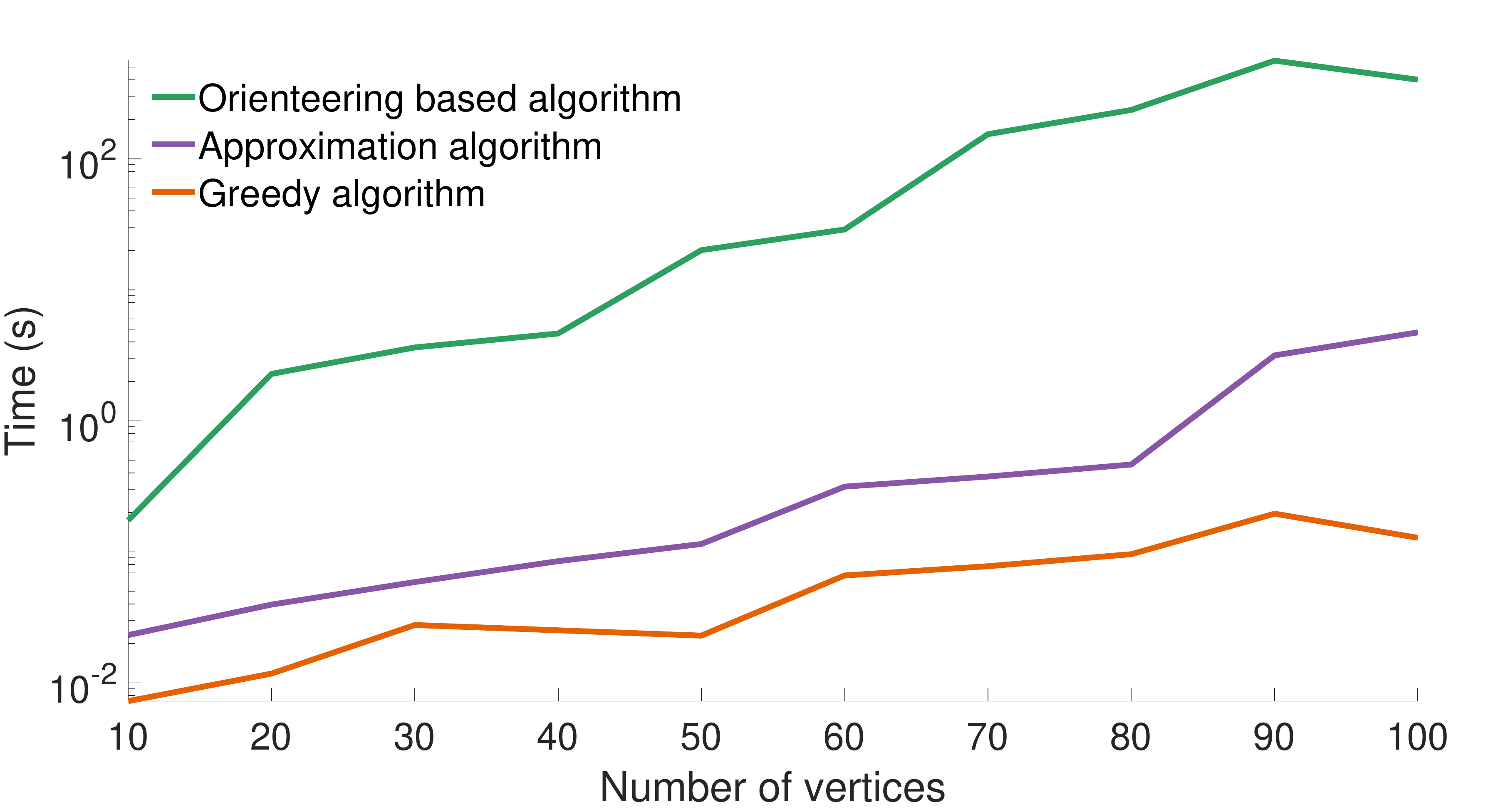}
\caption[Average runtimes of the algorithms]{Average runtimes of the algorithms. The line plot shows the mean run time (on a log scale) over $10$ random instances for each graph size.}
\label{fig:runtimes}
\end{figure}
\begin{figure}
\centering
\includegraphics[width=\linewidth]{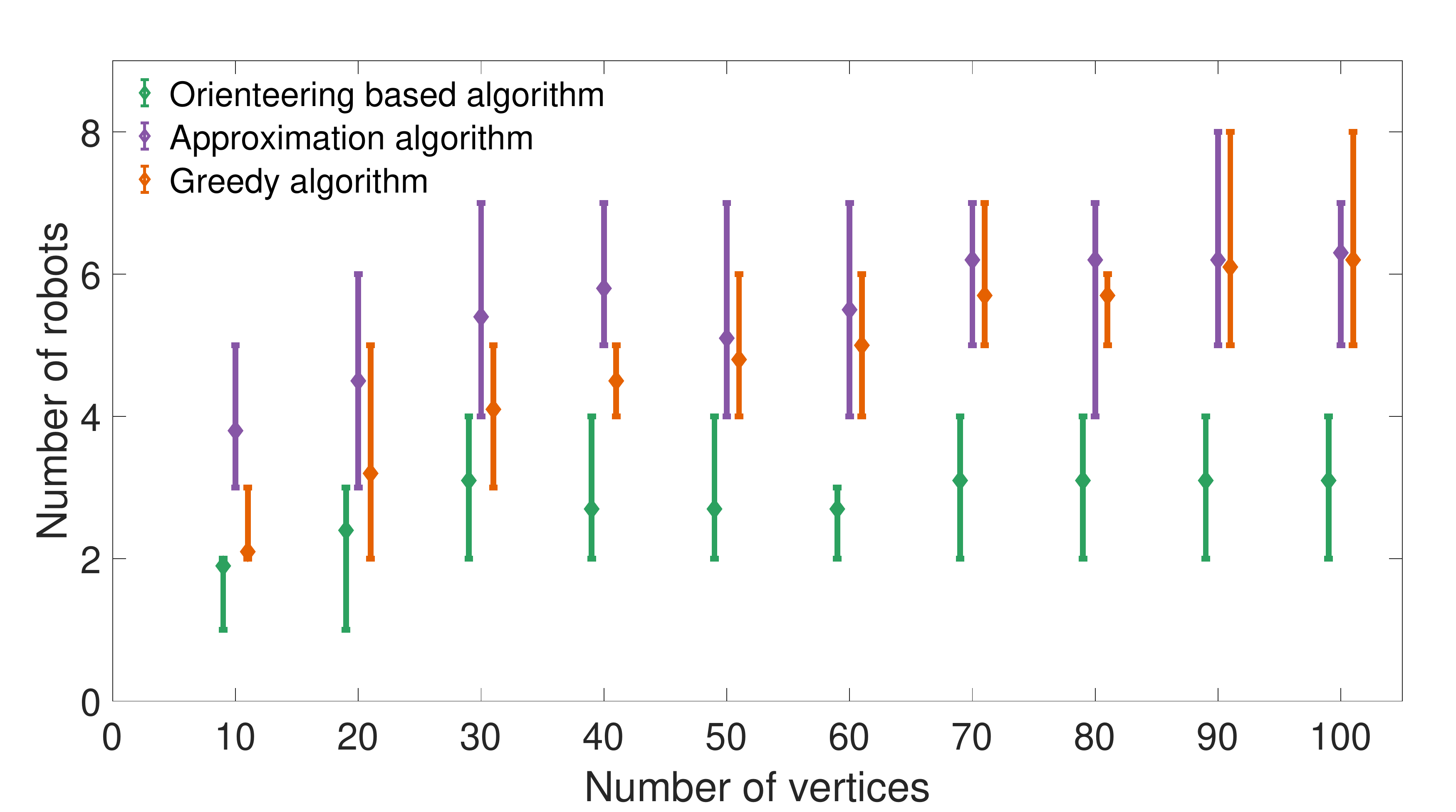}
\caption[Number of robots for each algorithm]{Average number of robots returned by each algorithm. The marker shows the mean over $10$ random instances for each graph size. The error bars show the minimum and maximum number of robots required for a graph size. }
\label{fig:obj_value}
\end{figure}

\subsection{Comparison with Existing Algorithms in Literature}
In~\cite{Drucker2014thesis,drucker2016cyclic} the authors propose an SMT (Satisfiability Modulo Theory) based approach using the Z3 solver~\cite{de2008z3} to solve the decision version of Problem~\ref{pbm:min_robs} without recharging constraints. The idea is to fix an upper bound on the period of the solution and model the problem as a constraint program. The authors also provide benchmark instances for the decision version of the problem. We tested our algorithms on the benchmark instances provided and compare the results to the SMT based solver provided by~\cite{drucker2016cyclic}.

Out of 300 benchmark instances, given a time limit of 10 minutes, the Z3 solver returned $182$ instances as satisfiable with the given number of robots. We ran our algorithms for each instance and checked if the number of robots returned is less than or equal to the number of robots in the benchmark instance. The approximation algorithm satisfied $170$ instances whereas Algorithm~\ref{alg:greedy} satisfied $178$ instances. The four satisfiable instances that Algorithm~\ref{alg:greedy} was unable to satisfy had optimal solutions where the walks share the vertices, and Algorithm~\ref{alg:greedy} returned one more robot than the optimal in all those instances. The drawback of using the constraint program to solve the problem is its poor scalability. It spent an average of $3.76$ seconds on satisfiable instances whereas Algorithm~\ref{alg:greedy} spent $3$ ms on those instances on average. Moreover, on one such instance where Algorithm~\ref{alg:greedy} returned one more robot than the Z3 solver, the Z3 solver spent 194 seconds as compared to $\sim 5$ ms for Algorithm~\ref{alg:greedy}. Note that these differences are for benchmark instances having up to $7$ vertices. As shown in Figure~\ref{fig:runtimes}, Algorithm~\ref{alg:greedy} takes $\sim 100$ seconds for $90$ vertex instances whereas we were unable to solve instances with even $15$ vertices within an hour using the Z3 solver. Hence, the scalability of the Z3 based solver hinders its use for problem instances of practical sizes.

\subsection{Wildfire Monitoring Using Min Max Weighted Latency}
In this section we provide an application example for Problem~\ref{pbm:weighted_latency} and demonstrate the performance of Algorithms~\ref{alg:approx_min_max} and~\ref{multiple_latency}.  Problem~\ref{pbm:weighted_latency}  can be used to monitor an environment where different locations in the environment need to be visited repeatedly and the time between consecutive visits to a particular location depends on the importance of that location. One such application is wildfire surveillance and suppression where Unmanned Aerial Vehicles (UAVs) can be used to detect, monitor and contain fires~\cite{afghah2019wildfire,bailon2022real}. To construct a problem instance, fire hotspot data was used from Canadian Wildland Fire Information System~\cite{CWFIS}. A fire hotspot refers to a pixel in a satellite image that displays a high level of infrared radiation, which signifies the presence of a heat source.
The database contains hotspot data for North America, with different attributes for each hotspot, such as location, modelled rate of spread of fire $\texttt{ros}$ in meters per minute at the hotspot location, approxmiate burned area $\texttt{estarea}$, modelled fire intensity $\texttt{fi}$ in $kW/m$, and others.

We used the database to sample the active hotspots on August 22, 2018, in a 25 kilometers radius around the Gravelly Valley airstrip in California. The vertices in the graph represent the hostpot locations and the vertex weight was calculated as $\texttt{estarea} + \texttt{ros}^2 +\lambda \texttt{fi}$ where $\lambda$ is a weight parameter set to $1/200$ in our experiments. The speed of the UAVs was set as $100$ $km/h$  and the operating time was set as $35$ minutes with the airstrip designated as the refuelling depot. Figure~\ref{fig:forest_fires} displays the monitored environment, including the airstrip and hotspot locations. The hotspot locations are color-coded based on their normalized vertex weights, using the scale in the top right of the image. 

The cycles returned by the Rooted Minimum Cycle Cover Problem in line~\ref{algln:rmccp_one_robot} of Algorithm~\ref{alg:approx_min_max} for the vertex subsets $V_0$ and $V_1$ are shown in Figures~\ref{fig:fire1} and~\ref{fig:fire2} respectively. For the single robot problem, Algorithm~\ref{alg:approx_min_max} returns a walk that traverses the cycles A, B, C, A, B, D in that order. The maximum weighted latency of this walk is compared to a cyclic walk that traverses the cycles A,B,C,D periodically in Table~\ref{tab:table_fire}. The solutions returned for two and three robots by Algorithm~\ref{multiple_latency} are also shown in the table. Their maximum weighted latencies are compared with the solutions where multiple robots are equally spaced on a single robot solution.

\begin{figure}
\centering
\begin{subfigure}{.49\textwidth}
\centering
\includegraphics[width=\linewidth]{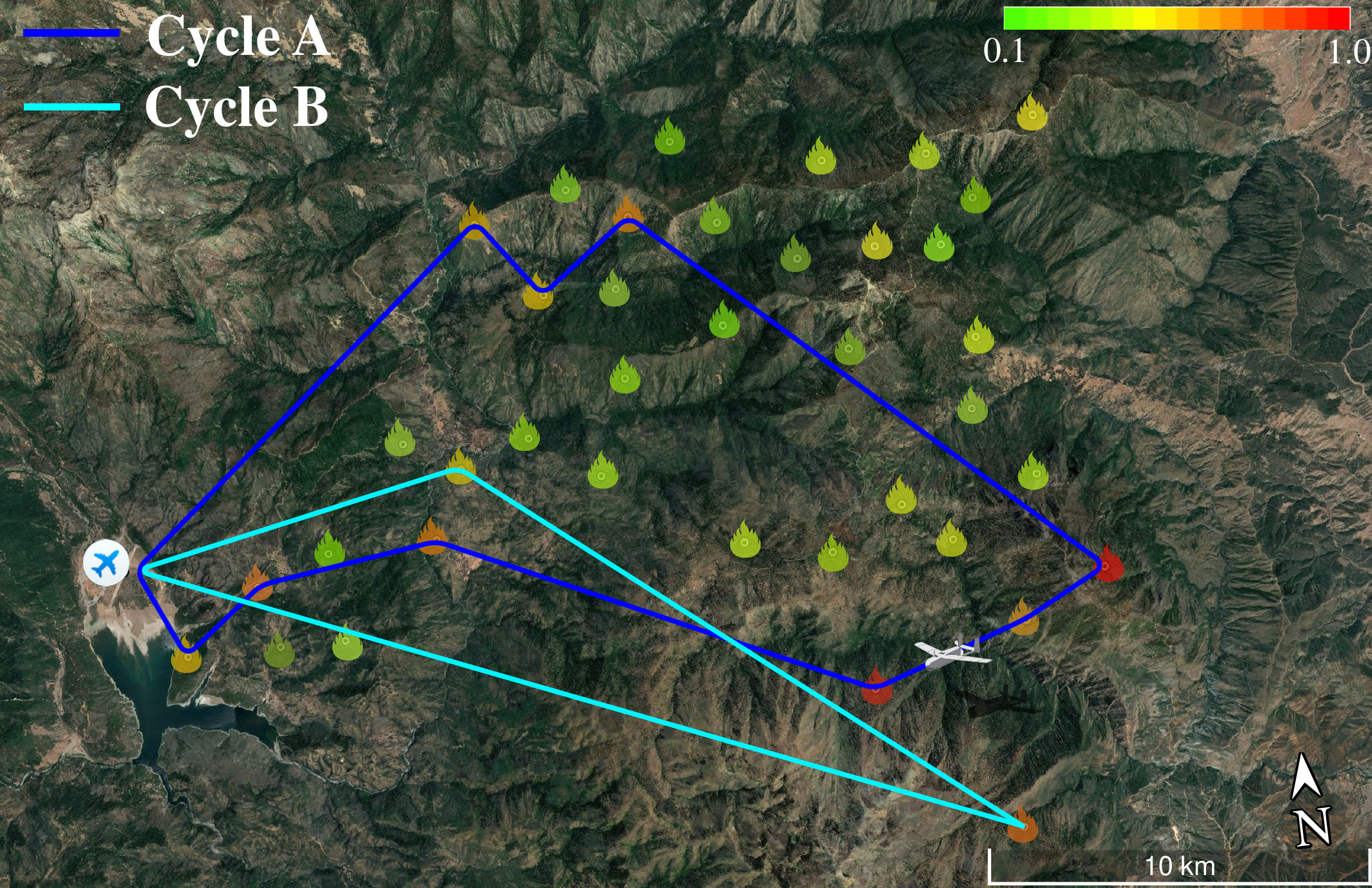}
\caption{Cycles A and B, covering the vertex subset $V_0$ of the vertices with higher vertex weights.}
\label{fig:fire1}
\end{subfigure}\\
\vskip1em
\begin{subfigure}{.49\textwidth}
\centering
\includegraphics[width=\linewidth]{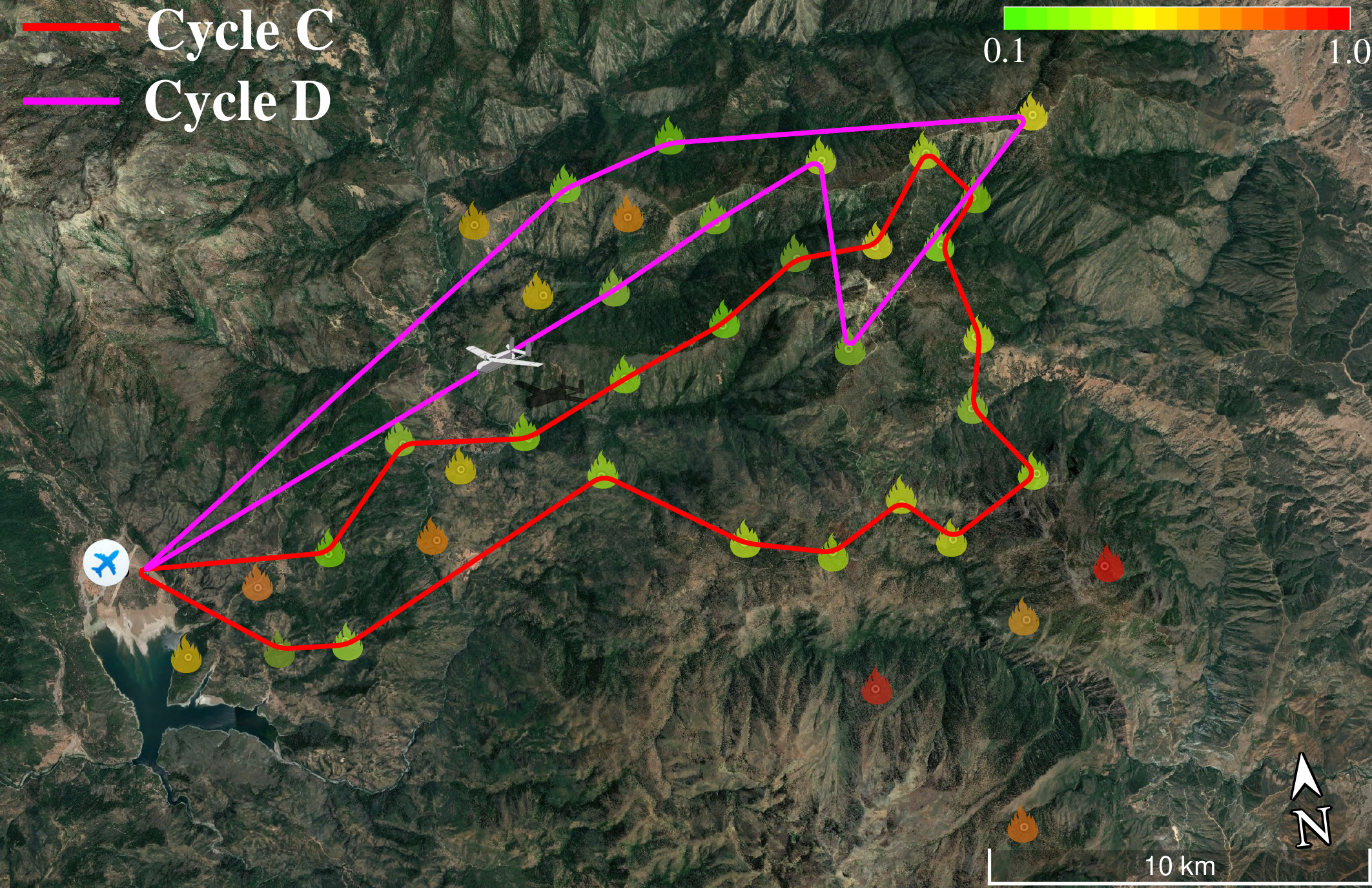}
\caption{Cycles C and D, covering the vertex subset $V_1$ of the vertices with lower vertex weights.}
\label{fig:fire2}
\end{subfigure}
\caption[Walks to continually monitor forest fires]{The environment and solutions for the wildfire monitoring problem. The colored icons represent the fire hotspot locations, with the color of an icon depicting the normalized vertex weight according to the scale shown in the top right of the figure. For a single robot problem, the walk returned by Algorithm~\ref{alg:approx_min_max} is to traverse cycles A,B,C,A,B,D periodically in this order. For the problem with two robots, Algorithm~\ref{multiple_latency} returns walks $\{W_1,W_2\}$ where $W_1$ traverses cycles A and B, and $W_2$ traverses cycles C and D.}
\label{fig:forest_fires}
\end{figure}

\begin{table}
\centering
\begin{tabular}{cccc} \toprule
    {\shortstack{Number \\of UAVs}} & {Algorithm} & {Walks} & {\shortstack{ Max Weighted\\ Latency}} \\ \midrule
    \multirow{2}{*}{1} & {Alg.~\ref{alg:approx_min_max}} & {$W_1=ABCABD$} &  {98.3} \\
    \cmidrule{2-4}
    & {Cyclic solution} & {$W_1= ABCD$} & 132.4\\
    \midrule
    \multirow{4}{*}{2} & \multirow{2}{*}{Alg.~\ref{multiple_latency}} & {$W_1 = AB$} & \multirow{2}{*}{63.4} \\
     &  & {$W_2=CD$} &  \\
     \cmidrule{2-4}
     & \multirow{2}{*}{\shortstack{Equally spaced on\\ cyclic solution}} & {$W_1 = ABCD$} & \multirow{2}{*}{66.2}\\
     & & {$W_2=W_1$ with lag} & \\ 

    \midrule
    \multirow{6}{*}{3} & \multirow{3}{*}{Alg.~\ref{multiple_latency}} & {$W_1 = AB$} & \multirow{3}{*}{33.47} \\
     &  &{$W_3 = CD$}  &  \\
     & & {$W_2=W_1$ with lag} &\\
     \cmidrule{2-4}
     & \multirow{3}{*}{\shortstack{Equally spaced on\\ cyclic solution}} & {$W_1 = ABCD$} & \multirow{2}{*}{44.1}\\
     & & {$W_2=W_1$ with lag} & \\
     & & {$W_3 = W_1$ with lag} & \\
     
    \bottomrule
\end{tabular}
\caption{Solution with different number of UAVs for the wildfire monitoring application. The cycles A,B,C and D are shown in Figure~\ref{fig:forest_fires}. }
    \label{tab:table_fire}
\end{table}

\section{Conclusions and Future Work}

In this paper, we addressed the problem of finding persistent monitoring paths for a team of robots with limited battery capacity that can be recharged at a recharging depot. Our focus was on satisfying latency constraints for locations in the environment. We presented an approximation algorithm and proposed an orienteering-based heuristic approach that provided better solutions than the approximation algorithm on the problem instances tested in simulations. We extended our analysis to propose an algorithm for minimizing the maximum weighted latency given a fixed number of robots. We evaluated our algorithms on different problem instances, including a wildfire monitoring application.

As a future direction, we propose incorporating multiple recharging depots for the robots, which could yield interesting results. Moreover, Algorithm~\ref{multiple_latency} provides a cost bound for the solution as a function of the optimal cost of a single robot. Investigating the relationship between the optimal cost of a single robot and the cost for multiple robots may result in an approximation ratio for Algorithm~\ref{multiple_latency}. In practical scenarios, battery consumption by the robots may be stochastic. Therefore, solving the persistent monitoring problem for these cases may require consideration of the probability of robots running out of charge before recharging at a recharging depot. Finding risk-averse solutions in such scenarios is also an interesting direction for future work.

\bibliographystyle{IEEEtran}
\bibliography{refs}     

\begin{thebibliography}{10}
\providecommand{\url}[1]{#1}
\csname url@samestyle\endcsname
\providecommand{\newblock}{\relax}
\providecommand{\bibinfo}[2]{#2}
\providecommand{\BIBentrySTDinterwordspacing}{\spaceskip=0pt\relax}
\providecommand{\BIBentryALTinterwordstretchfactor}{4}
\providecommand{\BIBentryALTinterwordspacing}{\spaceskip=\fontdimen2\font plus
\BIBentryALTinterwordstretchfactor\fontdimen3\font minus
  \fontdimen4\font\relax}
\providecommand{\BIBforeignlanguage}[2]{{%
\expandafter\ifx\csname l@#1\endcsname\relax
\typeout{** WARNING: IEEEtran.bst: No hyphenation pattern has been}%
\typeout{** loaded for the language `#1'. Using the pattern for}%
\typeout{** the default language instead.}%
\else
\language=\csname l@#1\endcsname
\fi
#2}}
\providecommand{\BIBdecl}{\relax}
\BIBdecl

\bibitem{cabrita2010infrastructure}
G.~Cabrita, P.~Sousa, L.~Marques, and A.~De~Almeida, ``Infrastructure
  monitoring with multi-robot teams,'' in \emph{International Conference on
  Intelligent Robots and Systems}, 2010, pp. 18--22.

\bibitem{basilico2012patrolling}
N.~Basilico, N.~Gatti, and F.~Amigoni, ``Patrolling security games: Definition
  and algorithms for solving large instances with single patroller and single
  intruder,'' \emph{Artificial Intelligence}, vol. 184, pp. 78--123, 2012.

\bibitem{asghar2016stochastic}
A.~B. Asghar and S.~L. Smith, ``Stochastic patrolling in adversarial
  settings,'' in \emph{IEEE American Control Conference}, 2016, pp. 6435--6440.

\bibitem{roberts2017submodular}
M.~Roberts, D.~Dey, A.~Truong, S.~Sinha, S.~Shah, A.~Kapoor, P.~Hanrahan, and
  N.~Joshi, ``Submodular trajectory optimization for aerial 3{D} scanning,'' in
  \emph{International Conference on Computer Vision}, 2017, pp. 5334--5343.

\bibitem{bircher2015structural}
A.~Bircher, K.~Alexis, M.~Burri, P.~Oettershagen, S.~Omari, T.~Mantel, and
  R.~Siegwart, ``Structural inspection path planning via iterative viewpoint
  resampling with application to aerial robotics,'' in \emph{IEEE International
  Conference on Robotics and Automation}, 2015, pp. 6423--6430.

\bibitem{cao2013multi}
N.~Cao, K.~H. Low, and J.~M. Dolan, ``Multi-robot informative path planning for
  active sensing of environmental phenomena: A tale of two algorithms,'' in
  \emph{International Conference on Autonomous Agents and Multi-Agent Systems},
  2013, pp. 7--14.

\bibitem{merino2012unmanned}
L.~Merino, F.~Caballero, J.~R. Mart{\'\i}nez-De-Dios, I.~Maza, and A.~Ollero,
  ``An unmanned aircraft system for automatic forest fire monitoring and
  measurement,'' \emph{Journal of Intelligent and Robotic Systems}, vol.~65,
  no. 1-4, pp. 533--548, 2012.

\bibitem{hokayem2007persistent}
P.~F. Hokayem, D.~Stipanovic, and M.~W. Spong, ``On persistent coverage
  control,'' in \emph{IEEE Conference on Decision and Control}, 2007, pp.
  6130--6135.

\bibitem{smith2011persistentocean}
R.~N. Smith, M.~Schwager, S.~L. Smith, B.~H. Jones, D.~Rus, and G.~S. Sukhatme,
  ``Persistent ocean monitoring with underwater gliders: Adapting sampling
  resolution,'' \emph{Journal of Field Robotics}, vol.~28, no.~5, pp. 714--741,
  2011.

\bibitem{nigam2012control}
N.~Nigam, S.~Bieniawski, I.~Kroo, and J.~Vian, ``Control of multiple uavs for
  persistent surveillance: algorithm and flight test results,'' \emph{IEEE
  Transactions on Control Systems Technology}, vol.~20, no.~5, pp. 1236--1251,
  2012.

\bibitem{elmaliach2009multi}
Y.~Elmaliach, N.~Agmon, and G.~A. Kaminka, ``Multi-robot area patrol under
  frequency constraints,'' \emph{Annals of Mathematics and Artificial
  Intelligence}, vol.~57, no. 3-4, pp. 293--320, 2009.

\bibitem{yu2015persistent}
J.~Yu, S.~Karaman, and D.~Rus, ``Persistent monitoring of events with
  stochastic arrivals at multiple stations,'' \emph{IEEE Transactions on
  Robotics}, vol.~31, no.~3, pp. 521--535, 2015.

\bibitem{Sak2008patrolling}
T.~Sak, J.~Wainer, and S.~K. Goldenstein, ``Probabilistic multiagent
  patrolling,'' in \emph{Brazilian Symposium on Artificial Intelligence}.\hskip
  1em plus 0.5em minus 0.4em\relax Springer, 2008, pp. 124--133.

\bibitem{Alamdari2014}
S.~Alamdari, E.~Fata, and S.~L. Smith, ``Persistent monitoring in discrete
  environments: Minimizing the maximum weighted latency between observations,''
  \emph{The International Journal of Robotics Research}, vol.~33, no.~1, pp.
  138--154, 2014.

\bibitem{pasqualetti2012cooperative}
F.~Pasqualetti, A.~Franchi, and F.~Bullo, ``On cooperative patrolling: Optimal
  trajectories, complexity analysis, and approximation algorithms,'' \emph{IEEE
  Transactions on Robotics}, vol.~28, no.~3, pp. 592--606, 2012.

\bibitem{nigam2008persistent}
N.~Nigam and I.~Kroo, ``Persistent surveillance using multiple unmanned air
  vehicles,'' in \emph{IEEE Aerospace Conference}, 2008, pp. 1--14.

\bibitem{palacios2016multi}
J.~M. Palacios-Gas{\'o}s, E.~Montijano, C.~Sagues, and S.~Llorente,
  ``Multi-robot persistent coverage using branch and bound,'' in \emph{IEEE
  American Control Conference}, 2016, pp. 5697--5702.

\bibitem{palacios2019equitable}
J.~M. Palacios-Gas{\'o}s, D.~Tardioli, E.~Montijano, and C.~Sag{\"u}{\'e}s,
  ``Equitable persistent coverage of non-convex environments with graph-based
  planning,'' \emph{The International Journal of Robotics Research}, vol.~38,
  no.~14, pp. 1674--1694, 2019.

\bibitem{palacios2016multi2}
J.~M. Palacios-Gas{\'o}s, E.~Montijano, C.~Sag{\"u}{\'e}s, and S.~Llorente,
  ``Multi-robot persistent coverage with optimal times,'' in \emph{IEEE
  Conference on Decision and Control}, 2016, pp. 3511--3517.

\bibitem{las2013persistent}
J.~Las~Fargeas, B.~Hyun, P.~Kabamba, and A.~Girard, ``Persistent visitation
  under revisit constraints,'' in \emph{International Conference on Unmanned
  Aircraft Systems}, 2013, pp. 952--957.

\bibitem{ho2015cyclic}
H.-M. Ho and J.~Ouaknine, ``The cyclic-routing {UAV} problem is
  pspace-complete,'' in \emph{International Conference on Foundations of
  Software Science and Computation Structures}.\hskip 1em plus 0.5em minus
  0.4em\relax Springer, 2015, pp. 328--342.

\bibitem{drucker2016cyclic}
N.~Drucker, M.~Penn, and O.~Strichman, ``Cyclic routing of unmanned aerial
  vehicles,'' in \emph{International Conference on AI and OR Techniques in
  Constriant Programming for Combinatorial Optimization Problems}.\hskip 1em
  plus 0.5em minus 0.4em\relax Springer, 2016, pp. 125--141.

\bibitem{afshani2022cyclic}
P.~Afshani, M.~de~Berg, K.~Buchin, J.~Gao, M.~L{\"o}ffler, A.~Nayyeri,
  B.~Raichel, R.~Sarkar, H.~Wang, and H.-T. Yang, ``On cyclic solutions to the
  min-max latency multi-robot patrolling problem,'' in \emph{International
  Symposium on Computational Geometry}, vol. 224, no.~2, 2022.

\bibitem{afshani2021approximation}
P.~Afshani, M.~De~Berg, K.~Buchin, J.~Gao, M.~L{\"o}ffler, A.~Nayyeri,
  B.~Raichel, R.~Sarkar, H.~Wang, and H.-T. Yang, ``Approximation algorithms
  for multi-robot patrol-scheduling with min-max latency,'' in \emph{Workshop
  on the Algorithmic Foundations of Robotics (WAFR)}, 2021, pp. 107--123.

\bibitem{mathew2015multirobot}
N.~Mathew, S.~L. Smith, and S.~L. Waslander, ``Multirobot rendezvous planning
  for recharging in persistent tasks,'' \emph{IEEE Transactions on Robotics},
  vol.~31, no.~1, pp. 128--142, 2015.

\bibitem{shi2022risk}
G.~Shi, N.~Karapetyan, A.~B. Asghar, J.-P. Reddinger, J.~Dotterweich,
  J.~Humann, and P.~Tokekar, ``Risk-aware uav-ugv rendezvous with
  chance-constrained markov decision process,'' in \emph{IEEE Conference on
  Decision and Control}, 2022, pp. 180--187.

\bibitem{maini2018persistent}
P.~Maini, K.~Yu, P.~Sujit, and P.~Tokekar, ``Persistent monitoring with
  refueling on a terrain using a team of aerial and ground robots,'' in
  \emph{IEEE International Conference on Intelligent Robots and Systems}, 2018,
  pp. 8493--8498.

\bibitem{hartuv2018scheduling}
E.~Hartuv, N.~Agmon, and S.~Kraus, ``Scheduling spare drones for persistent
  task performance under energy constraints,'' in \emph{International
  Conference on Autonomous Agents and MultiAgent Systems}, 2018, pp. 532--540.

\bibitem{hari2022bounds}
S.~K.~K. Hari, S.~Rathinam, S.~Darbha, S.~G. Manyam, K.~Kalyanam, and
  D.~Casbeer, ``Bounds on optimal revisit times in persistent monitoring
  missions with a distinct and remote service station,'' \emph{IEEE
  Transactions on Robotics}, 2022.

\bibitem{quottrup2004multi}
M.~M. Quottrup, T.~Bak, and R.~Zamanabadi, ``Multi-robot planning: A timed
  automata approach,'' in \emph{IEEE International Conference on Robotics and
  Automation}, vol.~5, 2004, pp. 4417--4422.

\bibitem{ulusoy2013optimality}
A.~Ulusoy, S.~L. Smith, X.~C. Ding, C.~Belta, and D.~Rus, ``Optimality and
  robustness in multi-robot path planning with temporal logic constraints,''
  \emph{The International Journal of Robotics Research}, vol.~32, no.~8, pp.
  889--911, 2013.

\bibitem{Drucker2014thesis}
N.~Drucker, ``Cyclic routing of unmanned aerial vehicles,'' Master's thesis,
  Technion {--} Israel Institute of Technology, Israel, 2014.

\bibitem{asghar2019multi}
A.~B. Asghar, S.~L. Smith, and S.~Sundaram, ``Multi-robot routing for
  persistent monitoring with latency constraints,'' in \emph{IEEE American
  Control Conference}, 2019, pp. 2620--2625.

\bibitem{golden1987orienteering}
B.~L. Golden, L.~Levy, and R.~Vohra, ``The orienteering problem,'' \emph{Naval
  Research Logistics}, vol.~34, no.~3, pp. 307--318, 1987.

\bibitem{chekuri2012improved}
C.~Chekuri, N.~Korula, and M.~P{\'a}l, ``Improved algorithms for orienteering
  and related problems,'' \emph{ACM Transactions on Algorithms}, vol.~8, no.~3,
  p.~23, 2012.

\bibitem{gunawan2016orienteering}
A.~Gunawan, H.~C. Lau, and P.~Vansteenwegen, ``Orienteering problem: A survey
  of recent variants, solution approaches and applications,'' \emph{European
  Journal of Operational Research}, vol. 255, no.~2, pp. 315--332, 2016.

\bibitem{yu2016improved}
W.~Yu and Z.~Liu, ``Improved approximation algorithms for some min-max and
  minimum cycle cover problems,'' \emph{Theoretical Computer Science}, vol.
  654, pp. 45--58, 2016.

\bibitem{friggstad2014approximation}
Z.~Friggstad and C.~Swamy, ``Approximation algorithms for regret-bounded
  vehicle routing and applications to distance-constrained vehicle routing,''
  in \emph{ACM symposium on Theory of computing}, 2014, pp. 744--753.

\bibitem{nagarajan2012approximation}
V.~Nagarajan and R.~Ravi, ``Approximation algorithms for distance constrained
  vehicle routing problems,'' \emph{Networks}, vol.~59, no.~2, pp. 209--214,
  2012.

\bibitem{kavraki1996probabilistic}
L.~E. Kavraki, P.~Svestka, J.-C. Latombe, and M.~H. Overmars, ``Probabilistic
  roadmaps for path planning in high-dimensional configuration spaces,''
  \emph{IEEE Transactions on Robotics and Automation}, vol.~12, no.~4, pp.
  566--580, 1996.

\bibitem{chevaleyre2004theoretical}
Y.~Chevaleyre, ``Theoretical analysis of the multi-agent patrolling problem,''
  in \emph{International Conference on Intelligent Agent Technology}.\hskip 1em
  plus 0.5em minus 0.4em\relax IEEE, 2004, pp. 302--308.

\bibitem{iyer2013submodular}
R.~K. Iyer and J.~A. Bilmes, ``Submodular optimization with submodular cover
  and submodular knapsack constraints,'' in \emph{Advances in Neural
  Information Processing Systems}, 2013, pp. 2436--2444.

\bibitem{letchford2013compact}
A.~N. Letchford, S.~D. Nasiri, and D.~O. Theis, ``Compact formulations of the
  steiner traveling salesman problem and related problems,'' \emph{European
  Journal of Operational Research}, vol. 228, no.~1, pp. 83--92, 2013.

\bibitem{gurobi}
\BIBentryALTinterwordspacing
Gurobi, ``Gurobi optimizer,'' 2018. [Online]. Available:
  \url{http://www.gurobi.com}
\BIBentrySTDinterwordspacing

\bibitem{helsgaun2000effective}
K.~Helsgaun, ``An effective implementation of the {L}in--{K}ernighan traveling
  salesman heuristic,'' \emph{European Journal of Operational Research}, vol.
  126, no.~1, pp. 106--130, 2000.

\bibitem{de2008z3}
L.~De~Moura and N.~Bj{\o}rner, ``Z3: An efficient {SMT} solver,'' in
  \emph{International conference on Tools and Algorithms for the Construction
  and Analysis of Systems}.\hskip 1em plus 0.5em minus 0.4em\relax Springer,
  2008, pp. 337--340.

\bibitem{afghah2019wildfire}
F.~Afghah, A.~Razi, J.~Chakareski, and J.~Ashdown, ``Wildfire monitoring in
  remote areas using autonomous unmanned aerial vehicles,'' in \emph{IEEE
  Conference on Computer Communications Workshops (INFOCOM)}, 2019, pp.
  835--840.

\bibitem{bailon2022real}
R.~Bailon-Ruiz, A.~Bit-Monnot, and S.~Lacroix, ``Real-time wildfire monitoring
  with a fleet of {UAV}s,'' \emph{Robotics and Autonomous Systems}, vol. 152,
  p. 104071, 2022.

\bibitem{CWFIS}
\BIBentryALTinterwordspacing
{CWFIS} datamart. [Online]. Available:
  \url{https://cwfis.cfs.nrcan.gc.ca/datamart}
\BIBentrySTDinterwordspacing

\end{thebibliography}
\end{document}